\documentclass[journal]{IEEEtran}
\usepackage{amsmath}
\usepackage{amsthm}
\usepackage{amssymb}
\usepackage{algorithm}
\usepackage{array}
\usepackage{algpseudocode}

\usepackage{booktabs}
\usepackage{caption}
\usepackage{cite}
\usepackage{enumitem}
\usepackage{makecell}   
\usepackage{graphicx}
\usepackage{hyperref} 
\usepackage{multirow}
\usepackage{rotating} 
\usepackage{ragged2e} 
\usepackage{stfloats}
\usepackage{soul}
\usepackage{subfigure}
\usepackage{tabularx}
\usepackage{textcomp}
\usepackage{times}
\usepackage{url}
\usepackage{verbatim}
\usepackage{xcolor}

\newtheorem{theorem}{Theorem}

\newtheorem{definition}{Definition}

\theoremstyle{remark}
\newtheorem{remark}[theorem]{Remark}
\newlist{tabitemize}{itemize}{1} 
\setlist[tabitemize]{label=\textbullet, nosep, leftmargin=*, 
                     before={\begin{minipage}[t]{\linewidth}\RaggedRight},
						after={\end{minipage}}}

\hyphenation{op-tical net-works semi-conduc-tor IEEE-Xplore}

\begin{document}

\title{Dynamic Influence Tracker: Measuring Time-Varying Sample Influence During Training}

\author{Jie Xu, Zihan Wu
}

\markboth{Journal of \LaTeX\ Class Files,~Vol.~14, No.~8, August~2021}%
{Shell \MakeLowercase{\textit{et al.}}: A Sample Article Using IEEEtran.cls for IEEE Journals}


\maketitle

\begin{abstract}
	Existing methods for measuring training sample influence on models only provide static, overall measurements, overlooking how sample influence changes during training. We propose Dynamic Influence Tracker (DIT), which captures the time-varying sample influence across arbitrary time windows during training.
	DIT offers three key insights: 1) Samples show different time-varying influence patterns, with some samples important in the early training stage while others become important later. 2) Sample influences show a weak correlation between early and late stages, demonstrating that the model undergoes distinct learning phases with shifting priorities. 3) Analyzing influence during the convergence period provides more efficient and accurate detection of corrupted samples than full-training analysis.   Supported by theoretical guarantees without assuming loss convexity or model convergence, DIT significantly outperforms existing methods, achieving up to 0.99 correlation with ground truth and above 98\% accuracy in detecting corrupted samples in complex architectures.
\end{abstract}

\section{Introduction}
Understanding training sample influence on deep learning models remains a key challenge for model interpretability and robustness. While influence functions~\cite{koh2017understanding} and subsequent research~\cite{guo2021fastif,schioppa2022scaling,choe2024your} can measure sample influence on models, they only work after model convergence, failing to capture the sample influence during training.

This limitation motivates two fundamental questions: 

\textit{1) How does sample influence evolve during training? }

\textit{2) How can we measure and utilize these time-varying dynamics to improve model training? } 

Answering these questions is challenging. First, analyzing non-converged models requires new theoretical frameworks beyond existing influence methods that rely on model convergence ~\cite{basu2021influence}. Second, measuring time-varying influence requires intensive computation of model updates, demanding high computational resources. Third, it is required to analyze multiple aspects of model behavior, such as parameter changes, loss changes, and prediction changes, to capture the complex interplay between samples and model learning dynamics.

To address these challenges, we propose Dynamic Influence Tracker (DIT), a query-based framework that measures the time-varying influence of training samples during training. DIT first estimates how removing individual samples affects model parameters in training windows. Then, DIT computes inner products between parameter changes and task-specific query vectors to efficiently measure sample influence on model behaviors.

Using DIT, we uncover several key insights about learning dynamics.
First, samples show different time-varying influence patterns (Early Influencers, Late Bloomers, Stable Influencers, and Highly Fluctuating), revealing the limitations of traditional static influence analysis methods. 
Second, sample influences show a weak correlation between early and late stages, revealing the multi-stage nature of deep learning, where learning priorities shift significantly over time.
Third, it is more effective and efficient to identify corrupted samples by analyzing sample influence during the model convergence period than the entire training process.

In summary, DIT has the following advantages:
\begin{itemize}
	\item \textbf{Accurate and Dynamic Influence Analysis.} DIT achieves state-of-the-art accuracy with a 0.99 correlation to ground truth. As the first approach enabling influence analysis within any training window, DIT enables efficient corruption detection ($>$98\% accuracy) in single-epoch analysis.
	\item \textbf{Query-Based Multi-faceted Influence Measure.} DIT uses queries to evaluate how samples impact multiple aspects of model behavior, providing a comprehensive evaluation during training.
	\item \textbf{Robustness to Non-Convergence and Non-Convexity.} DIT theoretically guarantees influence estimation for arbitrary training windows without requiring model convergence or loss convexity, supporting real-world deep learning applications.
\end{itemize}

\section{Preliminaries}
Let $\mathcal{Z} = \mathcal{X} \times \mathcal{Y}$ denote the space of observations, where $\mathcal{X} \subseteq \mathbb{R}^d$ is the input space and $\mathcal{Y}$ is the output space.
Given a training set $D = \{z_i\}_{i=1}^N$ of i.i.d. observations $z_i = (x_i, y_i) \in \mathcal{Z}$, a model $f: \mathcal{X} \times \Theta \rightarrow \mathcal{Y}$ parameterized by $\theta \in \Theta \subseteq \mathbb{R}^p$, and a loss function $\ell: \mathcal{Z} \times \Theta \rightarrow \mathbb{R}$, we formulate the learning problem as:
\begin{equation}
	\hat{\theta} = \arg\min_{\theta \in \Theta} \frac{1}{N} \sum_{i=1}^N \ell(z_i; \theta).
\end{equation}

\begin{definition}[Stochastic Gradient Descent (SGD)]
	Let $g(z; \theta) = \nabla_\theta \ell(z; \theta)$, and SGD starts from $\theta^{[0]}$. For mini-batch $S_t \subseteq \{1, ..., N\}$ and learning rate $\eta_t$ at step $t$, SGD updates:
	\begin{equation}
		\theta^{[t+1]} = \theta^{[t]} - \frac{\eta_t}{|S_t|} \sum_{i \in S_t} g(z_i; \theta^{[t]}), \quad 0\leq t \leq T - 1,
	\end{equation}
	where $T$ is the total number of SGD steps.
\end{definition}

\begin{definition}[Influence Function~\cite{koh2017understanding}]
	The influence function measures the effect of removing sample $z_j$ on optimal parameters $\hat{\theta}$, defined as $\hat{\theta}_{-j} - \hat{\theta}$ where $\hat{\theta}_{-j} = \arg\min_\theta \sum_{i=1, i \neq j}^N \ell(z_i; \theta)$. For strongly convex losses, it can be approximated as:
	\begin{equation}
		\hat{\theta}_{-j} - \hat{\theta} \approx - \hat{H}^{-1} \nabla_\theta \ell(z_j; \hat{\theta}),
	\end{equation}
	where $\hat{H} = \frac{1}{N} \sum_{z \in D} \nabla^2 \ell(z; \hat{\theta})$ is the Hessian of the loss at the optimal parameters.
\end{definition}

\begin{definition}[Counterfactual SGD]
	The counterfactual SGD excludes the $j$-th training sample to analyze its influence. Starting from $\theta_{-j}^{[0]} = \theta^{[0]}$, the parameters are updated at each step $t$ using:
	\begin{equation}
		\theta^{[t+1]} = \theta^{[t]} - \frac{\eta_t}{|S_t|} \sum_{i \in S_t\setminus\{j\}} g(z_i; \theta_{-j}^{[t]}), \quad 0\leq t \leq T - 1.
	\end{equation}
\end{definition}

\begin{definition}[SGD-Influence~\cite{hara2019data}]
	The SGD-influence of training sample $z_j \in D$ within $t$ steps is defined as $\theta_{-j}^{[t]} - \theta^{[t]}$.
\end{definition}

While the influence function analyzes impact at the optimum, SGD-Influence measures how excluding sample $z_j$ affects the entire training process. The following sections will introduce our method for estimating sample influence during arbitrary training windows.

\section{Parameter Change in Time Window} \label{secparam}
\subsection{Problem Formulation}
Our goal is to estimate the impact of training samples during an arbitrary time window $[t_1, t_2]$, where $0\leq t_1<t_2\leq T$. We formalize this goal with a counterfactual question: how would the model's parameters change during the interval $[t_1, t_2]$ if a specific sample $z_j$ is not used?

\begin{definition}[Parameter Change in Time Window]
	For a time window $[t_1, t_2]$ during SGD training, the parameter change estimates the contribution of a training sample $z_j$ as:
	\begin{equation}
		\Delta \theta_{-j}^{[t_1,t_2]} = (\theta_{-j}^{[t_2]} - \theta_{-j}^{[t_1]}) - (\theta^{[t_2]} - \theta^{[t_1]}),
	\end{equation}
	where $(\theta^{[t_2]} - \theta^{[t_1]})$ represents the parameter changes under standard SGD within $[t_1, t_2]$, and $(\theta_{-j}^{[t_2]} - \theta_{-j}^{[t_1]})$ represents the parameter changes over the same interval when excluding sample $z_j$.
\end{definition}

For the special case $[0,t]$, starting from the beginning of training, this simplifies to:
\begin{align}
	\Delta \theta_{-j}^{[0,t]} = (\theta_{-j}^{[t]} - \theta_{-j}^{[0]}) - (\theta^{[t]} - \theta^{[0]})= \theta_{-j}^{[t]} - \theta^{[t]}.
\end{align}
For brevity, we denote $ \Delta \theta_{-j}^{[t]} = \Delta \theta_{-j}^{[0,t]} $.

\subsection{Estimation of Parameter Change in Time Window}
We aim to estimate the parameter change due to the absence of sample $z_j$ over the time window $[t_1, t_2]$, where $0\leq t_1<t_2\leq T$:
\begin{equation} \label{equdelta}
	\begin{split}
		\Delta \theta_{-j}^{[t_1,t_2]} = (\theta_{-j}^{[t_2]} - \theta_{-j}^{[t_1]}) - (\theta^{[t_2]} - \theta^{[t_1]})\\= (\theta_{-j}^{[t_2]} - \theta^{[t_2]}) - (\theta_{-j}^{[t_1]} - \theta^{[t_1]}).
	\end{split}
\end{equation}
Consider the normal SGD update for step $t$:
\begin{equation}
	\theta^{[t+1]} = \theta^{[t]} - \frac{\eta_t}{|S_t|} \sum_{i \in S_t} g(z_i; \theta^{[t]}).
\end{equation}
Consider the SGD update excluding sample $z_j$:
\begin{equation}
	\theta_{-j}^{[t+1]} = \theta_{-j}^{[t]} - \frac{\eta_t}{|S_t|} \sum_{i \in S_t\setminus\{j\}} g(z_i; \theta_{-j}^{[t]}).
\end{equation}
Calculate the difference between the two updates:
\begin{equation} \label{eqdiff}
	\begin{split}
		&\theta_{-j}^{[t+1]} - \theta^{[t+1]} =(\theta_{-j}^{[t]} - \theta^{[t]}) \\&- \frac{\eta_t}{|S_t|} (\sum_{i \in S_t\setminus\{j\}} g(z_i; \theta_{-j}^{[t]}) - \sum_{i \in S_t} g(z_i; \theta^{[t]})).
	\end{split}
\end{equation}
Approximate the gradient differences using a first-order Taylor expansion:
\begin{equation} \label{eqg}
	g(z_i; \theta_{-j}^{[t]}) - g(z_i; \theta^{[t]}) \approx \nabla_\theta g(z_i; \theta^{[t]})^T (\theta_{-j}^{[t]} - \theta^{[t]}),
\end{equation}
where $\nabla_\theta g(z_i; \theta^{[t]})$ is the gradient of $g(z_i; \theta)$ with respect to $\theta$, evaluated at $\theta^{[t]}$.
Define the approximate Hessian matrix $H^{[t]}$ as the average of the outer products of these gradients over the mini-batch:
\begin{equation}\label{eqht}
	H^{[t]} = \frac{1}{|S_t|} \sum_{i \in S_t} \nabla_\theta g(z_i; \theta^{[t]})^T.
\end{equation}

Averaging Eq.(\ref{eqg}) over $S_t$ and using Eq.(\ref{eqht}), we have:
\begin{equation} \label{eqsumst}
	\frac{1}{|S_t|} \sum_{i \in S_t} (g(z_i; \theta_{-j}^{[t]}) - g(z_i; \theta^{[t]})) \approx H^{[t]} (\theta_{-j}^{[t]} - \theta^{[t]}).
\end{equation}

Using Eq.(\ref{eqsumst}) in Eq.(\ref{eqdiff}), we have:
\begin{equation}\label{eq14}
	\begin{split}
		\theta_{-j}^{[t+1]} - \theta^{[t+1]} &\approx (I - \eta_t H^{[t]})(\theta_{-j}^{[t]} - \theta^{[t]}) \\&+ \mathbf{1}_{j \in S_t}\frac{\eta_t}{|S_t|} g(z_j; \theta^{[t]}),
	\end{split}
\end{equation}
where $\mathbf{1}_{j \in S_t}$ is an indicator function that equals 1 if $j \in S_t$, otherwise 0. The full derivation is provided in Appendix \ref{app:detailed_derivation}.

Let $Z_t = I - \eta_{t} H^{[t]}$, $ \mathbf{\tilde{1}}_j^{[t]} = \mathbf{1}_{j \in S_{t}}\frac{\eta_{t}}{|S_{t}|} g(z_j; \theta^{[t]})$ and recursively apply this relation over the interval $[t_1, t_2]$:
\begin{equation} \label{equt2}
	\begin{split}
		\theta_{-j}^{[t_2]} - \theta^{[t_2]} \approx Z_{t_2-1}Z_{t_2-2}\ldots Z_{t_1}(\theta_{-j}^{[t_1]} - \theta^{[t_1]}) \\+ \sum_{t=t_1}^{t_2-1} Z_{t_2-1}Z_{t_2-2}\ldots Z_{t+1} \mathbf{\tilde{1}}_j^{[t]}.
	\end{split}
\end{equation}
Combining Eq. (\ref{equdelta}) and Eq. (\ref{equt2}), we can get:
\begin{equation} \label{equdelta1}
	\begin{split}
		\Delta \theta_{-j}^{[t_1,t_2]} \approx \left(\prod_{k=t_1}^{t_2-1} Z_k - I\right)(\theta_{-j}^{[t_1]} - \theta^{[t_1]}) \\+ \sum_{t=t_1}^{t_2-1} \left(\prod_{k=t+1}^{t_2-1} Z_k\right) \mathbf{\tilde{1}}_j^{[t]}.
	\end{split}
\end{equation}

We use Eq. (\ref{equdelta1}) for the interval $[0, t_1]$ with $	\theta_{-j}^{[0]}= \theta^{[0]} $ to get $(\theta_{-j}^{[t_1]} - \theta^{[t_1]})$:
\begin{equation}\label{eqt1jt1}
	\Delta \theta_{-j}^{[0,t_1]}=	\theta_{-j}^{[t_1]} - \theta^{[t_1]} \approx \sum_{t=0}^{t_1-1} \left(\prod_{k=t+1}^{t_1-1} Z_k\right) \mathbf{\tilde{1}}_j^{[t]}.
\end{equation}
Substituting Eq. (\ref{eqt1jt1}) into Eq. (\ref{equdelta1}), we obtain:
\begin{equation} \label{eqestimate}
	\begin{split}
		\Delta \theta_{-j}^{[t_1,t_2]} \approx & \left(\prod_{k=t_1}^{t_2-1} Z_k - I\right)\left(\sum_{t=0}^{t_1-1} \left(\prod_{k=t+1}^{t_1-1} Z_k\right) \mathbf{\tilde{1}}_j^{[t]}\right) \\ 
		&+ \sum_{t=t_1}^{t_2-1} \left(\prod_{k=t+1}^{t_2-1} Z_k\right) \mathbf{\tilde{1}}_j^{[t]}.
	\end{split}
\end{equation}
We denote the right-hand side of Eq. (\ref{eqestimate}) as $\widehat{\Delta \theta}_{-j}^{[t_1,t_2]}$, which represents our estimator of the parameter change. 

\subsection{Estimation Error Analysis} 
To analyze the quality of this estimator, we derive theoretical bounds on the estimation error $\|\Delta \theta_{-j}^{[t_1,t_2]} - \widehat{\Delta \theta}_{-j}^{[t_1,t_2]}\|$. 
Our analysis provides theoretical guarantees in non-convex settings without requiring model convergence, with bounds that hold for arbitrary training intervals. This makes DIT particularly suitable for real-world deep learning scenarios. The complete analysis can be found in Appendix \ref{app:estimation_error_proof}.

\section{Dynamic Influence Tracker}
While Section \ref{secparam} establishes how to measure sample influence on model parameters, these parameter changes may not directly reflect their impact on different model behaviors.

\subsection{Query-Based DIT} \label{secqueryDIT}
To understand the impact on model behaviors (e.g., test loss, predictions), we propose Dynamic Influence Tracker (DIT) to project parameter changes onto meaningful directions using query vectors, where each query vector defines a direction of interested model behavior.

\begin{definition}[Query-based Dynamic Influence Tracker]
	Let $q: [0, T] \rightarrow \mathbb{R}^p$ be a query function that maps time $t$ to a query vector $q(t) \in \mathbb{R}^p$. The Query-based Dynamic Influence Tracker for a training sample $z_j$ over the time window $[t_1, t_2]$ is defined as:
	\begin{equation}
		Q_{-j}^{[t_1,t_2]}(q) = \langle q(t_2), \Delta\theta_{-j}^{[t_2]} \rangle - \langle q(t_1), \Delta\theta_{-j}^{[t_1]} \rangle, 
	\end{equation}
	where $\Delta\theta_{-j}^{[t]} = \theta_{-j}^{[t]} - \theta^{[t]}$ represents the parameter change at time $t$ and $\langle \cdot, \cdot \rangle$ denotes the standard inner product in $\mathbb{R}^p$. 
\end{definition}

Using the test loss gradient as query vector, $q(t) = \nabla_\theta \ell(z_{\text{test}}; \theta^{[t]})$, we have:
\begin{equation}
	\begin{split}
		&Q_{-j}^{[t_1,t_2]}(q) 
		\\&= \langle \nabla_\theta \ell(z_{\text{test}}; \theta^{[t_2]}), \Delta\theta_{-j}^{[t_2]} \rangle - \langle \nabla_\theta \ell(z_{\text{test}}; \theta^{[t_1]}), \Delta\theta_{-j}^{[t_1]} \rangle 
		\\&\approx [\ell(z_{\text{test}}; \theta_{-j}^{[t_2]}) - \ell(z_{\text{test}}; \theta_{-j}^{[t_1]})]\\ & \quad\quad\quad\quad\quad\quad - [\ell(z_{\text{test}}; \theta^{[t_2]}) - \ell(z_{\text{test}}; \theta^{[t_1]})]. 
	\end{split}
\end{equation}

Different choices of $q$ enable analysis of various model characteristics. We can set $q = \nabla_\theta f(x_\text{test}; \theta^{[t]})$ measures prediction changes, $q = e_i$ (standard basis vector) examines individual parameter importance, and $q = \nabla_\theta \ell(z_j; \theta^{[t]})$ assesses gradient alignments. A detailed analysis of these query vectors is in Appendix \ref{app:dit_approximation}.

\subsection{DIT Implementation}
To compute $Q_{-j}^{[t_1,t_2]}(q)$, we propose Algorithm \ref{alg:dit_training} for collecting essential information during model training to estimate parameter changes, and Algorithm \ref{alg:dit_inference} for efficient influence computation using these values.

\subsubsection{Model Training}
Algorithm \ref{alg:dit_training} presents the model training process with information collection. Users can set $W$ to cover potential query ranges while avoiding full trajectory storage.

\begin{algorithm}[H]
	\caption{Model Training} \label{alg:dit_training}
	\begin{algorithmic}[1]
		\Require Training dataset $D = \{z_n\}_{n=1}^N$, learning rate $\eta_t$, batch size $|S_t|$, training steps $T$, selectable storage window $W$
		\Ensure Stored information $A$ 
		\State Initialize model parameters $\theta^{[0]}$
		\State Initialize an empty sequence $A$
		\For {$t = 0$ to $T-1$}
		\State $S_t = \text{SampleBatch}(D, |S_t|)$
		\State $\theta^{[t+1]} = \theta^{[t]} - \frac{\eta_t }{|S_t|} \sum_{i \in S_t} g(z_i; \theta^{[t]})$
		\State \textbf{if} $t \in W$ \textbf{then} $A[t] = \{S_t, \eta_t, \theta^{[t+1]} \}$
		\EndFor
		\State \textbf{return} $A$
	\end{algorithmic}
\end{algorithm}

To estimate parameter changes, we collect minimal but sufficient information within user-specified time windows $W$ during standard SGD training, avoiding the storage of full training trajectories required by traditional methods.

For scenarios with strict storage constraints, we design a checkpoint-based implementation (Appendix \ref{app:dit_Checkpoints}) that greatly reduces storage to $O(Ep)$ while maintaining accuracy.  

\subsubsection{Influence Computation}
Given the trajectory information, Algorithm \ref{alg:dit_inference} computes $Q_{-j}^{[t_1,t_2]}(q)$ through efficient backward propagation.

\begin{algorithm}[H]
	\caption{DIT Sample Influence Computation}\label{alg:dit_inference}
	\begin{algorithmic}[1]
		\Require Stored information $A $, query function $q$, time window $[t_1, t_2] $, specified sample $z_j$
		\Ensure Estimated influence $Q $ for sample $z_j$
		\State Initialize $Q \gets 0 $, $u_1^{[t_2 - 1]} \gets 0 $
		\State Initialize $u_2^{[t_2 - 1]} \gets q(t_2) $
		\For {$t = t_2 - 1$ \textbf{downto} $0$}
		\If{$j \in S_t $}
		\State $Q \gets Q + \left\langle (u_2^{[t]} - u_1^{[t]}), \dfrac{\eta_t}{|S_t|} g(z_j; \theta^{[t]}) \right\rangle $
		\EndIf
		\State $u_1^{[t-1]} \gets u_1^{[t]} - \eta_t H^{[t]} u_1^{[t]} $
		\State $u_2^{[t-1]} \gets u_2^{[t]} - \eta_t H^{[t]} u_2^{[t]} $
		\State \textbf{if} $t = t_1$ \textbf{then} \; $u_1^{[t-1]} \gets q(t_1)$ \allowbreak
		\EndFor
		\State \textbf{return} $Q$
	\end{algorithmic}
\end{algorithm}
Algorithm~\ref{alg:dit_inference} utilizes two key variables, $u_2^{[t]} $ and $u_1^{[t]} $, which propagate $q(t_2) $ and $q(t_1)$ backwards through time while incorporating the $Z_k$ matrices. The algorithm computes $Q$ by summing the inner products of $(u_2^{[t]} - u_1^{[t]}) $ with $\mathbf{\tilde{1}}_j^{[t]} $ at each time step. The final sum $Q$ matches $Q_{-j}^{[t_1,t_2]}(q)$ as defined in Section \ref{secqueryDIT}. See Appendix \ref{appenproofalg2} for proof.

DIT avoids the computationally intensive direct computation and storage of the Hessian matrices, which typically requires $O(Tp^2)$ operations. Instead, DIT efficiently computes Hessian-vector products $H^{[t]}u = \nabla_\theta \langle u, g(z; \theta^{[t]}) \rangle$, requiring only $O(|S_t|p)$ operations per iteration. This optimization effectively handles large models and datasets in modern machine-learning contexts.

\section{Experiments}
We evaluate DIT by examining how training sample influences evolve (Section \ref{secpsid}), comparing DIT's influence estimation accuracy against existing methods (Sections \ref{secinfluacc}, \ref{secpattern}), analyzing influence patterns across training stages (Section \ref{secinflusimi}), and showing DIT time window analysis benefits practical ML applications (Section \ref{secappdit}).

\subsection{Experimental Setup}
Experiments were conducted on eight NVIDIA RTX A5000 GPUs (24GB each), dual Intel Xeon Gold 6342 CPUs (2.80GHz, 96 cores), and 503GB RAM. Implementation uses Ubuntu 22.04.3 LTS, PyTorch v2.4.1, CUDA 12.4, and Python 3.11.9. All results are reported as mean ± standard deviation over 16 runs with different random seeds. 

\paragraph{Datasets}
We evaluate on ImageNet-1K~\cite{deng2009imagenet}, Adult~\cite{Dua2019}, 20Newsgroups~\cite{Lang95}, and MNIST/EMNIST~\cite{lecun-mnisthandwrittendigit-2010,Cohen2017EMNIST}, covering image, tabular, and text domains.  

\paragraph{Models} 
We used 1) Logistic Regression (LR) as a convex baseline, 2) Deep Neural Network (DNN), 3) Convolutional Neural Network (CNN), and 4) a Vision Transformer with 16x16 patch size (ViT-B/16). The DNN, CNN, and ViT-B/16 represent non-convex scenarios.  

Detailed specifications are provided in Appendix \ref{appexp}.

\paragraph{Comparison Methods} 
We compare DIT against two baseline methods:

1) \textbf{Leave-One-Out (LOO)} serves as ground truth, measuring influence by retraining without sample $z_j$.
$ \Delta \ell_{LOO}(z_j) = \frac{1}{M} \sum_{i=1}^M \left( \ell(z_i, \theta_{-j}) - \ell(z_i, \theta) \right)$, where $z_i \in D_{\text{test}}$, $M$ is the size of the test set $D_{\text{test}} = \{ z_i \}_{i=1}^{M}$.  

2) \textbf{Influence Functions (IF)}~\cite{koh2017understanding} estimates the influence of removing a training sample $z_j$ on the model's overall loss for a test set $D_{\text{test}}$: 
$ I(z_j, D_{\text{test}}) = -\frac{1}{M} \sum_{i=1}^M \nabla_\theta \ell(z_i, \theta)^T H^{-1} \nabla_\theta \ell(z_j, \theta) $,
where $H$ is the Hessian of the model's loss at $\theta$.

3) \textbf{DIT} estimates influence by setting $q(t) = \frac{1}{M} \sum_{i=1}^M \nabla_\theta \ell(z_i; \theta^{[t]}) $, measuring the impact on test set $D_{\text{test}}$ loss across time window $[t_1, t_2]$: 
$Q_{-j}^{[t_1,t_2]}(q) \approx \frac{1}{M} \sum_{i=1}^M \left[ \ell(z_i; \theta_{-j}^{[t_2]}) - \ell(z_i; \theta_{-j}^{[t_1]}) \right] - \frac{1}{M} \sum_{i=1}^M \left[ \ell(z_i; \theta^{[t_2]}) - \ell(z_i; \theta^{[t_1]}) \right]$.

\subsection{Patterns of Sample Influence Dynamics} \label{secpsid}
While existing methods provide a static estimate of sample influence for the entire training process, our study reveals that influence on model performance evolves dynamically during training. To uncover this, we conducted a preliminary exploration using LOO, as it provides a ground truth assessment of each sample's influence on model performance. Our methodology involved randomly selecting 256 training samples and using LOO to evaluate their loss change at each epoch during model training.  For each sample at each epoch, we temporarily removed it from training, retrained the model for that epoch, and recorded the resulting change in loss. This process generated a time series of sample influences, allowing us to track how the importance of each sample evolved throughout the training process.

As the model converges during training, the loss change decreases with increasing epochs. To identify patterns of sample influence rather than relative influences, we normalized the values within each epoch using StandardScaler. We then used linear regression to analyze trends in influence changes. Figure \ref{fig:influence_dynamics} shows four distinct influence evolution patterns, displaying centroid values for each group. Detailed experimental settings are provided in Appendix \ref{apppattern}. 
\begin{figure}[t]
	\centering
	\includegraphics[width=0.99\linewidth]{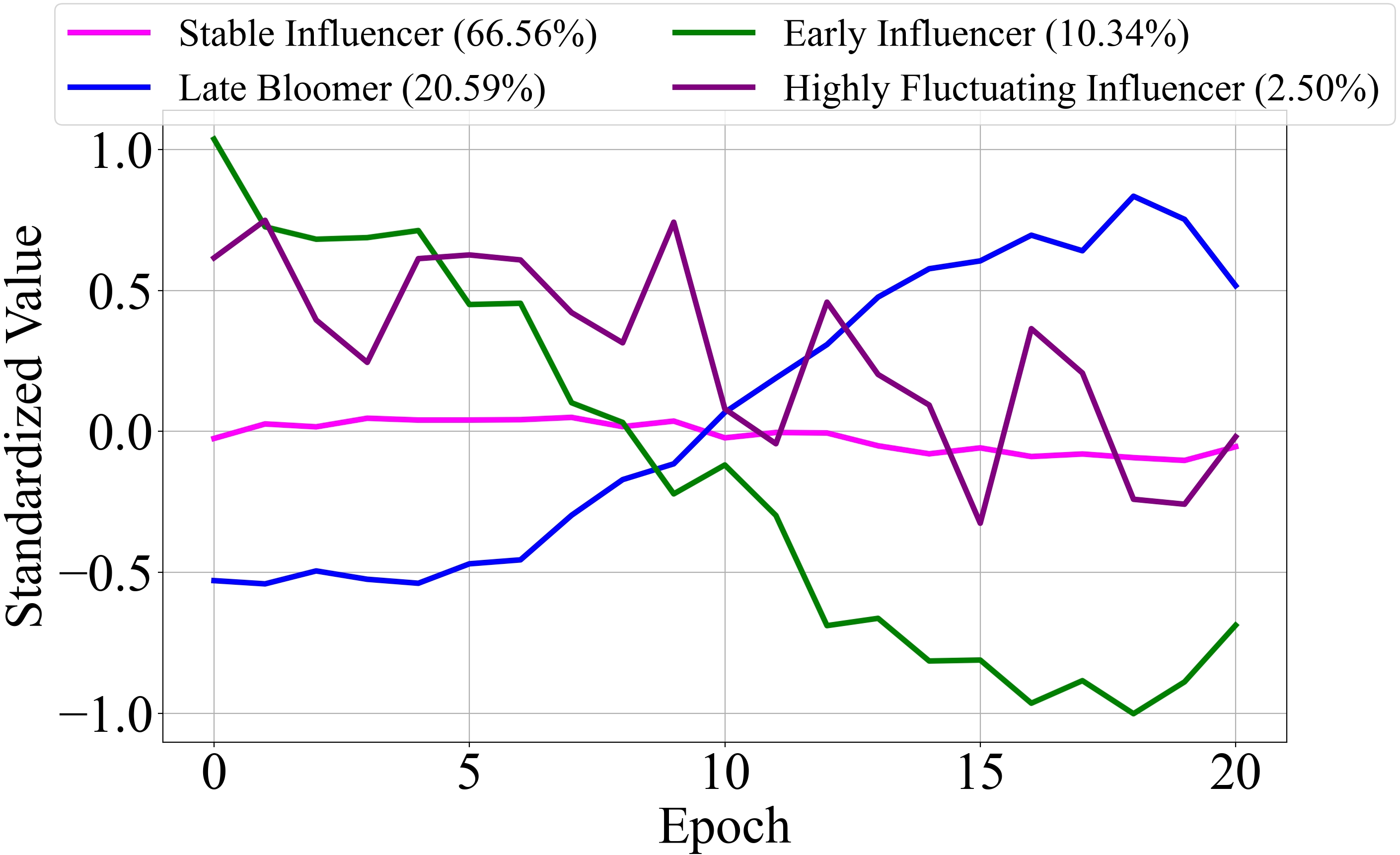}
	\caption{ Illustration of influence dynamics patterns for MNIST training using DNN}
	\label{fig:influence_dynamics}
\end{figure}
\begin{itemize}
	\item \textbf{Early Influencers}: Significant early impact that diminishes over time.
	\item \textbf{Late Bloomers}: Gain influence as training progresses.
	\item \textbf{Stable Influencers}: Consistent influence throughout training.
	\item \textbf{Highly Fluctuating Influencers}: Large variations in influence across training.
\end{itemize}
We further analyzed the pattern distribution across datasets and models, as shown in Table \ref{tab:cluster_distribution}. 

\begin{table}[h]
	\small
	\centering
	\caption{Distribution of influence dynamic patterns across datasets and models (percentage)}
	\label{tab:cluster_distribution}
	\setlength{\tabcolsep}{0.5pt}  
	\begin{tabular}{c|c|cccc}
		\toprule
		Model & Dataset & \begin{tabular}[c]{@{}c@{}}\textbf{Stable} \\ \textbf{Influencer}\end{tabular} & \begin{tabular}[c]{@{}c@{}}\textbf{Early} \\ \textbf{Influencers}\end{tabular} & \begin{tabular}[c]{@{}c@{}}\textbf{Late} \\ \textbf{Bloomers}\end{tabular} & \begin{tabular}[c]{@{}c@{}}\textbf{Highly} \\ \textbf{Fluctuating}\end{tabular} \\
		\cmidrule(lr){1-6}
		\multirow{4}{*}{LR} 
		& Adult & 64.75±7.20 & 11.67±3.27 & 20.15±5.87 & 3.42±1.82 \\
		& 20News & 85.94±5.38 & 1.17±1.28 & 5.57±1.26 & 7.32±4.24 \\ 
		& MNIST & 80.16±12.10 & 0.79±0.96 & 10.78±9.35 & 8.27±3.36 \\
		& EMNIST & 75.49±8.40 & 0.70±0.53 & 13.77±6.77 & 10.04±2.75 \\
		\midrule
		\multirow{4}{*}{DNN} 
		& Adult & 97.91±2.66 & 0.313±1.12 & 1.00±1.55 & 0.78±0.89 \\
		& 20News & 79.03±7.78 & 8.44±4.11 & 11.41±3.90 & 1.13±0.83 \\ 
		& MNIST & 66.56±13.26 & 10.34±4.65 & 20.59±9.44 & 2.50±0.93 \\
		& EMNIST & 78.16±14.48 & 7.09±7.678 & 7.47±9.87 & 7.28±3.55 \\
		\midrule
		\multirow{2}{*}{CNN} 
		& MNIST & 83.76±19.91 & 0.34±0.42 & 11.74±16.60 & 4.15±3.94 \\
		& EMNIST & 86.50±7.50 & 1.87±5.15 & 1.59±3.91 & 10.03±2.48 \\
		\bottomrule
	\end{tabular}
\end{table}

These results show several key insights.
1) All datasets and models show diverse influence patterns, with Stable Influencers dominating but other patterns consistently present. This underscores the dynamic nature of sample influence throughout the training process.
2) The presence of Early Influencers and Late Bloomers highlights the importance of time-varying analysis in understanding sample influence. DIT's ability to capture these time-varying dynamics provides a significant advantage over static influence estimation methods.
3) The varying pattern distributions across model-dataset combinations show a complex interplay between data characteristics and model architecture, emphasizing the necessity of dynamic influence analysis approaches.

\subsection{Influence Estimation Accuracy}  \label{secinfluacc}
To validate DIT's accuracy in estimating influence, we compared DIT against IF using LOO as ground truth. We employed DIT's full-time window $[0, T]$ for a fair comparison with IF, which can only measure overall sample influence on the final model. To evaluate how closely DIT and IF approximate LOO, we adopted four metrics: Pearson and Spearman correlations for linear and monotonic relationships, respectively, Kendall's tau for ordinal relationships, and Jaccard similarity for the top 30\% influencers. Detailed metric descriptions are in Appendix \ref{appmetrics}.

\begin{table*}[h]
	\small
	\centering
	\caption{Performance comparison of DIT and IF for Logistic Regression and Deep Neural Network}
	\label{tab:precise_performance}
	\setlength{\tabcolsep}{1.5pt}
	\begin{tabular}{c|c|cc|cc|cc|cc}
		\toprule
		Model & Dataset & \multicolumn{2}{c|}{Pearson} & \multicolumn{2}{c|}{Spearman} & \multicolumn{2}{c|}{Kendall's Tau} & \multicolumn{2}{c}{Jaccard} \\
		\cmidrule(lr){3-10}
		& & DIT & IF & DIT & IF & DIT & IF & DIT & IF \\
		\midrule
		\multirow{3}{*}{LR} 
		& Adult & \textbf{0.99±0.01} & 0.91±0.04& \textbf{0.99±0.01} & 0.93±0.02 & \textbf{0.95±0.01} & 0.79±0.04 & \textbf{0.91±0.04} & 0.71±0.06 \\
		& 20News & \textbf{0.99±0.01} & 0.90±0.13& \textbf{0.99±0.01} & 0.94±0.08 & \textbf{0.97±0.01} & 0.84±0.13 & \textbf{0.95±0.03} & 0.78±0.16 \\
		& MNIST & \textbf{0.93±0.10} & 0.76±0.14 & \textbf{0.98±0.01} & 0.61±0.22 & \textbf{0.95±0.02} & 0.49±0.21 & \textbf{0.91±0.05} & 0.48±0.14 \\
		\midrule
		\multirow{3}{*}{DNN}
		& Adult & \textbf{0.95±0.02} & 0.88±0.04 & \textbf{0.95±0.03} & 0.86±0.04 & \textbf{0.83±0.06} & 0.69±0.05 & \textbf{0.75±0.08} & 0.56±0.07 \\
		& 20News & \textbf{0.85±0.07} & 0.77±0.05 & \textbf{0.85±0.08} & 0.80±0.06 & \textbf{0.71±0.08} & 0.62±0.07 & \textbf{0.67±0.08} & 0.55±0.07\\
		& MNIST & \textbf{0.90±0.07} & 0.25±0.28 & \textbf{0.98±0.01} & 0.26±0.33 & \textbf{0.90±0.03} & 0.19±0.24 & \textbf{0.85±0.05} & 0.27±0.19 \\
		\bottomrule
	\end{tabular}
\end{table*}

\begin{table*}[t!]
	\small
	\centering
	\caption{Pattern-specific performance comparison between DIT and IF using MNIST-DNN.}
	\label{tab:precise_performance_pattern}
	\setlength{\tabcolsep}{1.5pt}
	\begin{tabular}{c|cc|cc|cc|cc}
		\toprule
		Sample Pattern& \multicolumn{2}{c|}{Pearson} & \multicolumn{2}{c|}{Spearman} & \multicolumn{2}{c|}{Kendall's Tau} & \multicolumn{2}{c}{Jaccard} \\
		\cmidrule(lr){2-9}
		& DIT & IF & DIT & IF & DIT & IF & DIT & IF \\
		\midrule
		Stable Influencers & \textbf{0.95±0.03} & 0.23±0.39& \textbf{0.96±0.03} & 0.16±0.40& \textbf{0.87±0.05} & 0.13±0.28& \textbf{0.82±0.12} & 0.26±0.16\\
		Early Influencers & \textbf{0.94±0.04} & 0.35±0.29& \textbf{0.98±0.01} & 0.35±0.30 & \textbf{0.92±0.03} & 0.26±0.23 & \textbf{0.89±0.07} & 0.29±0.20 \\
		Late Bloomers & \textbf{0.98±0.02} & 0.23±0.46& \textbf{0.98±0.02} & 0.19±0.38 & \textbf{0.90±0.05} & 0.15±0.27 & \textbf{0.85±0.10} & 0.27±0.21 \\
		Highly Fluctuating & \textbf{0.76±0.18} & -0.10±0.54 & \textbf{0.72±0.18} & -0.08±0.48 & \textbf{0.63±0.21} & -0.09±0.40 & \textbf{0.52±0.34} & 0.15±0.17\\
		\bottomrule
	\end{tabular}
\end{table*}

\begin{figure}[H] 
	\centering
	\subfigure[LR, adult]{
		\includegraphics[width=0.45\linewidth]{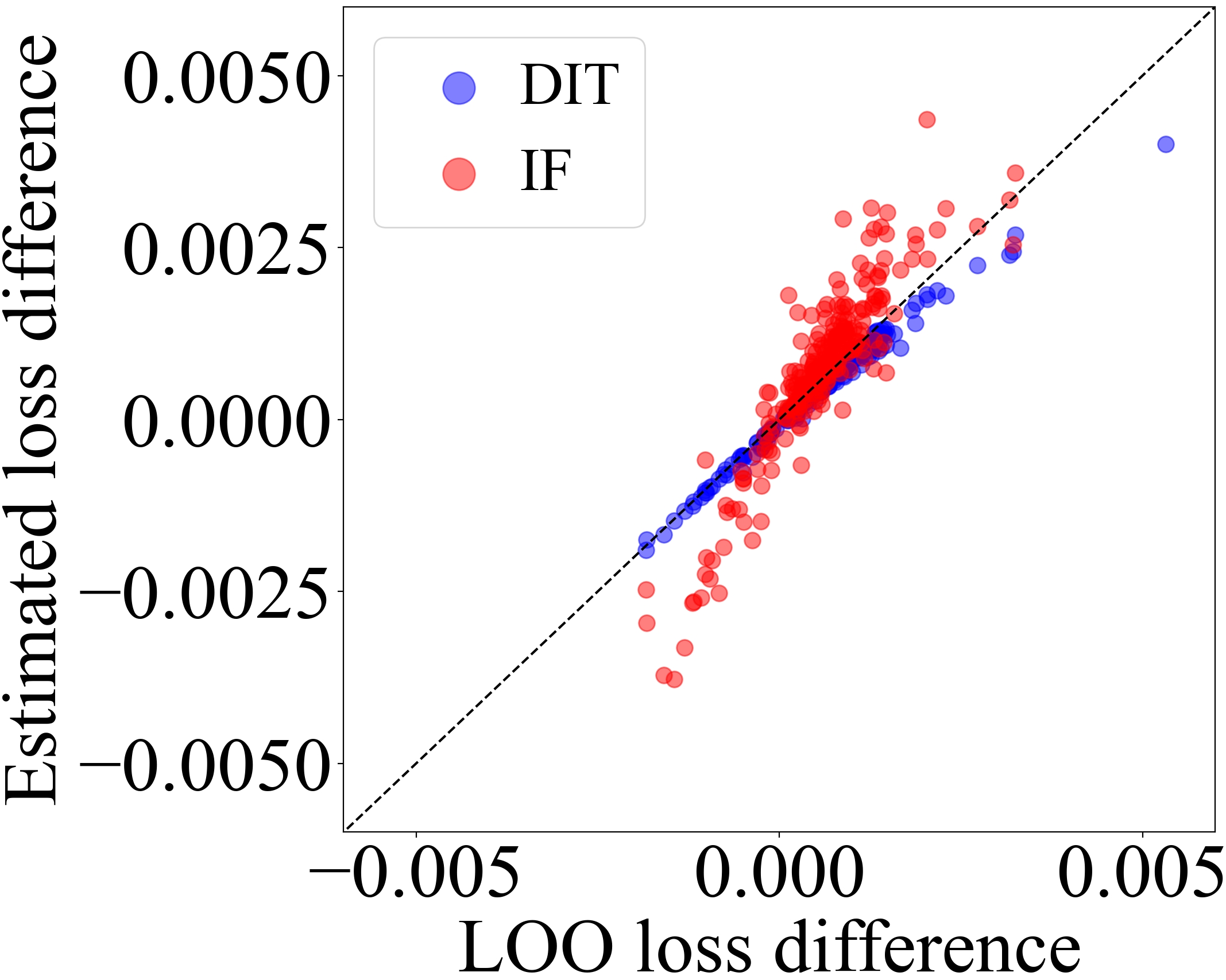} 
		\label{fig:image1}
	}
	\subfigure[DNN, adult]{
		\includegraphics[width=0.45\linewidth]{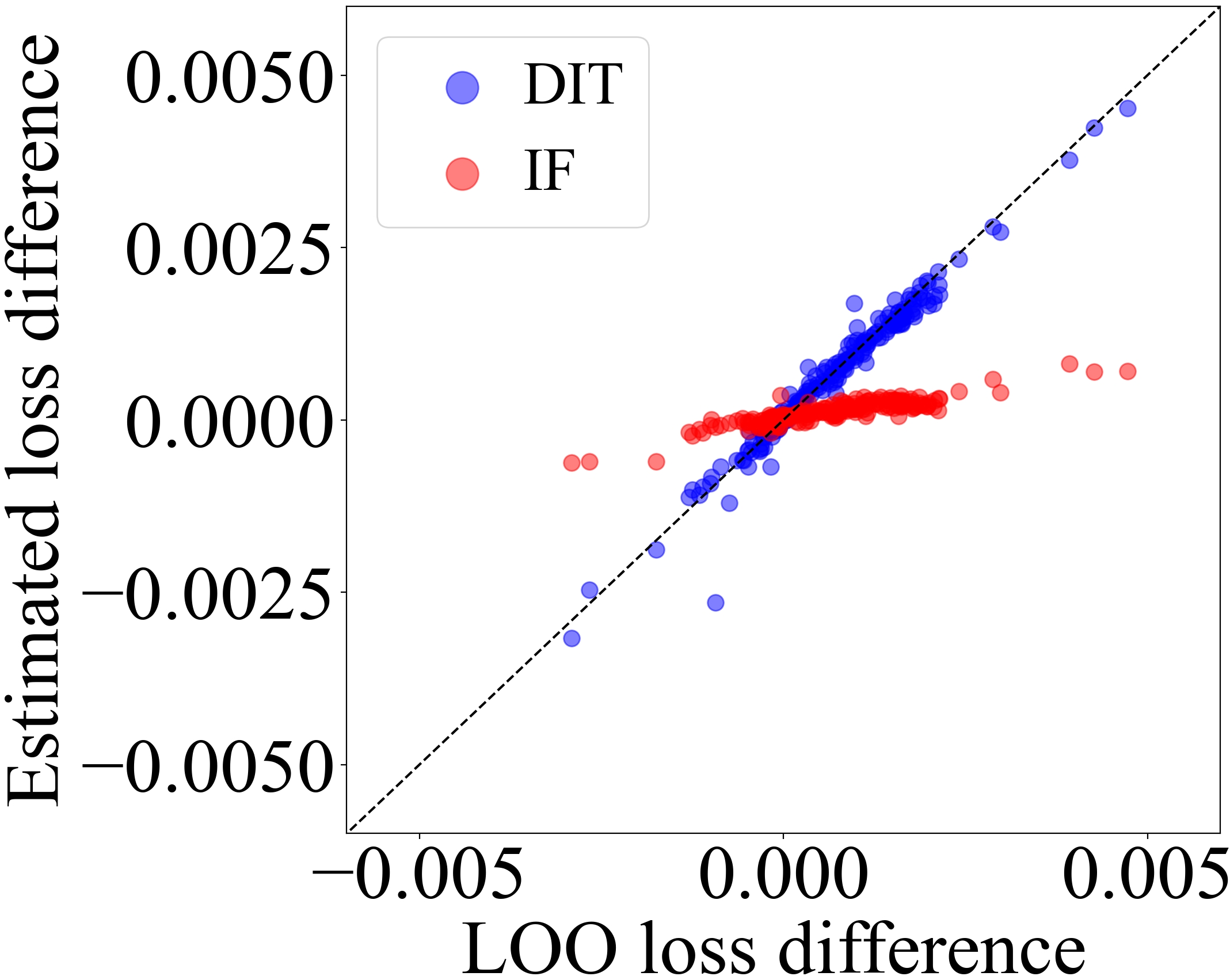} 
		\label{fig:image4}
	}
	
	\subfigure[LR, 20news]{
		\includegraphics[width=0.45\linewidth]{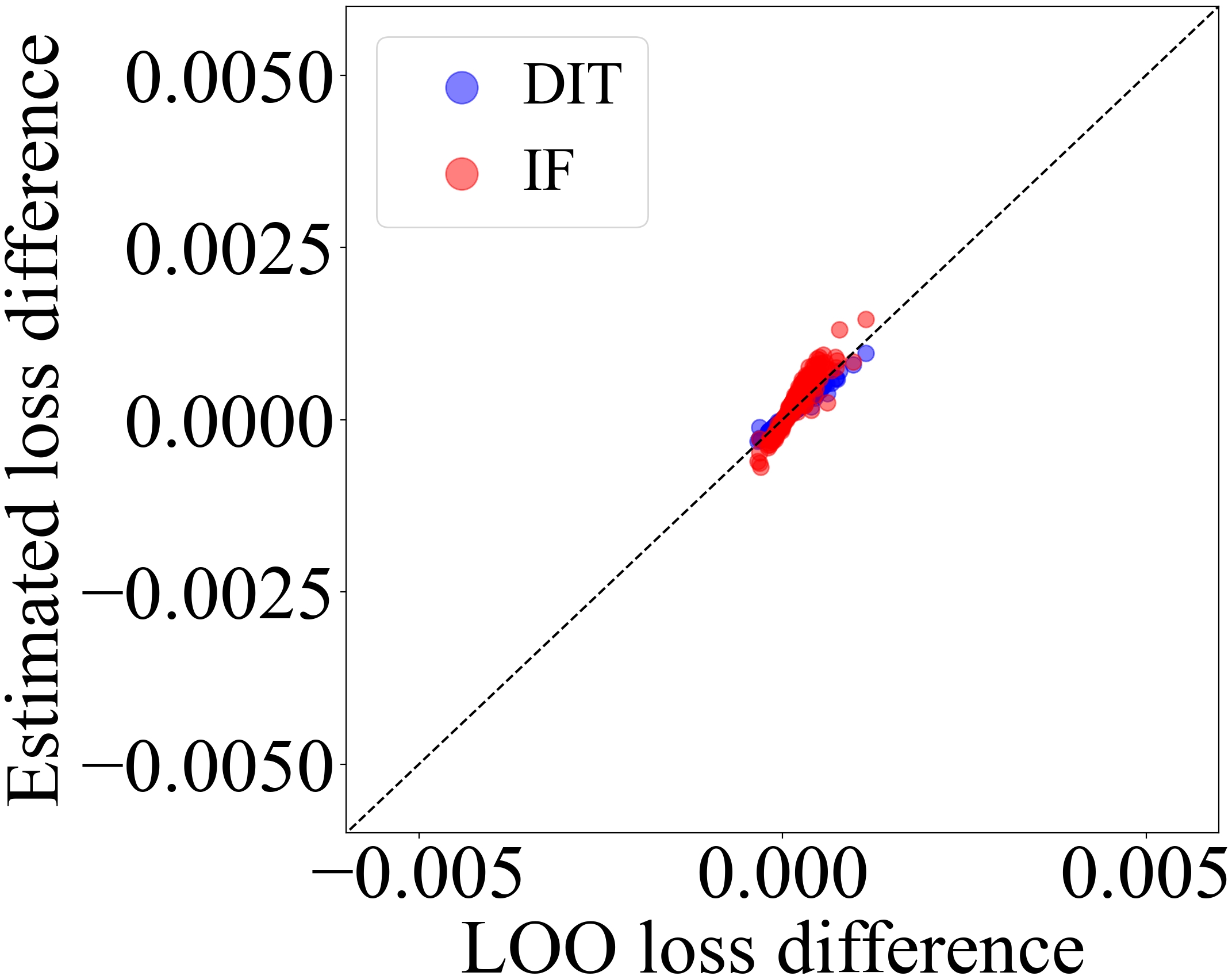}
		\label{fig:image2}
	}
	\subfigure[DNN, 20news]{
		\includegraphics[width=0.45\linewidth]{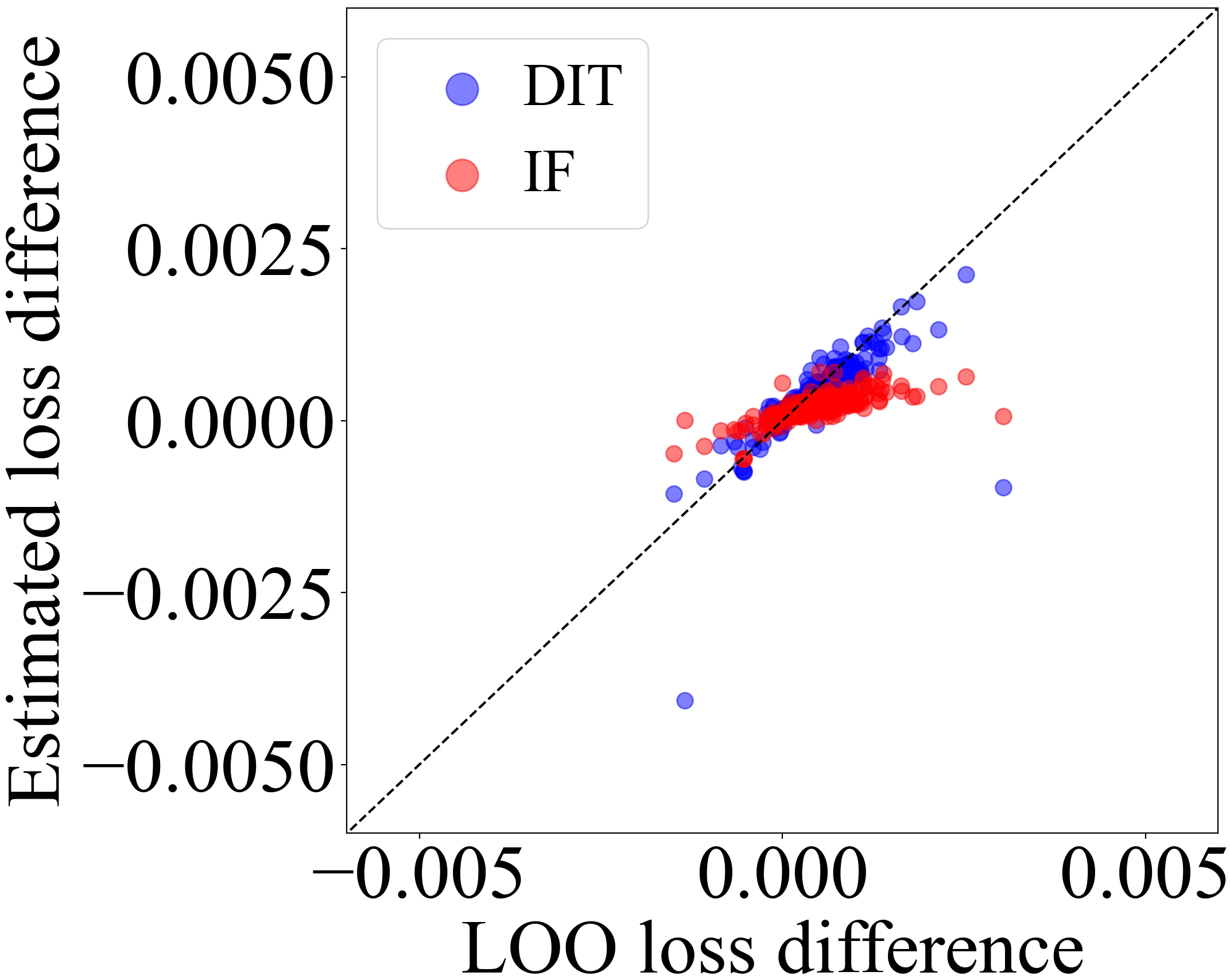}
		\label{fig:image5}
	}
	
	\subfigure[LR, mnist]{
		\includegraphics[width=0.45\linewidth]{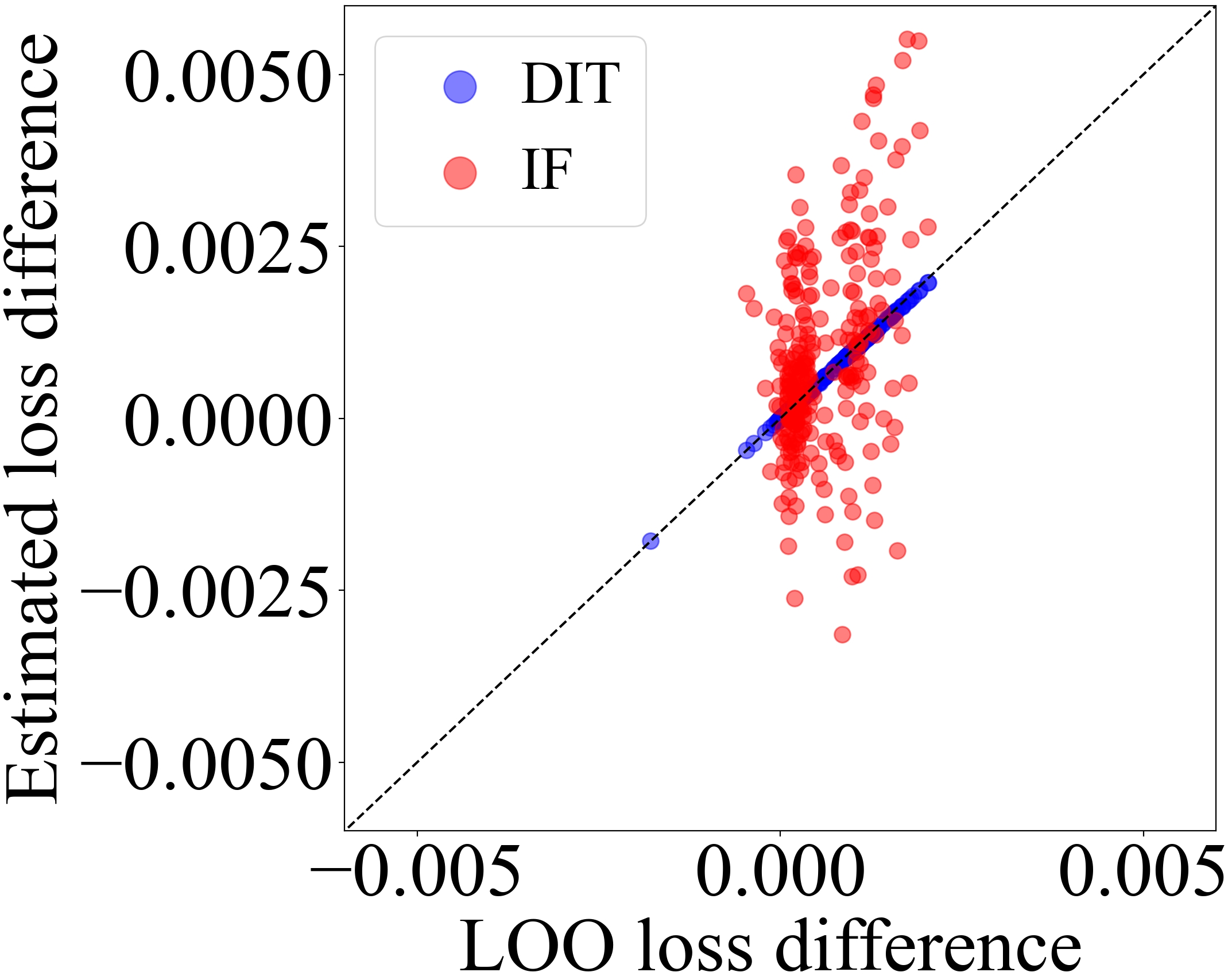}
		\label{fig:image3}
	}
	\subfigure[DNN, MNIST]{
		\includegraphics[width=0.45\linewidth]{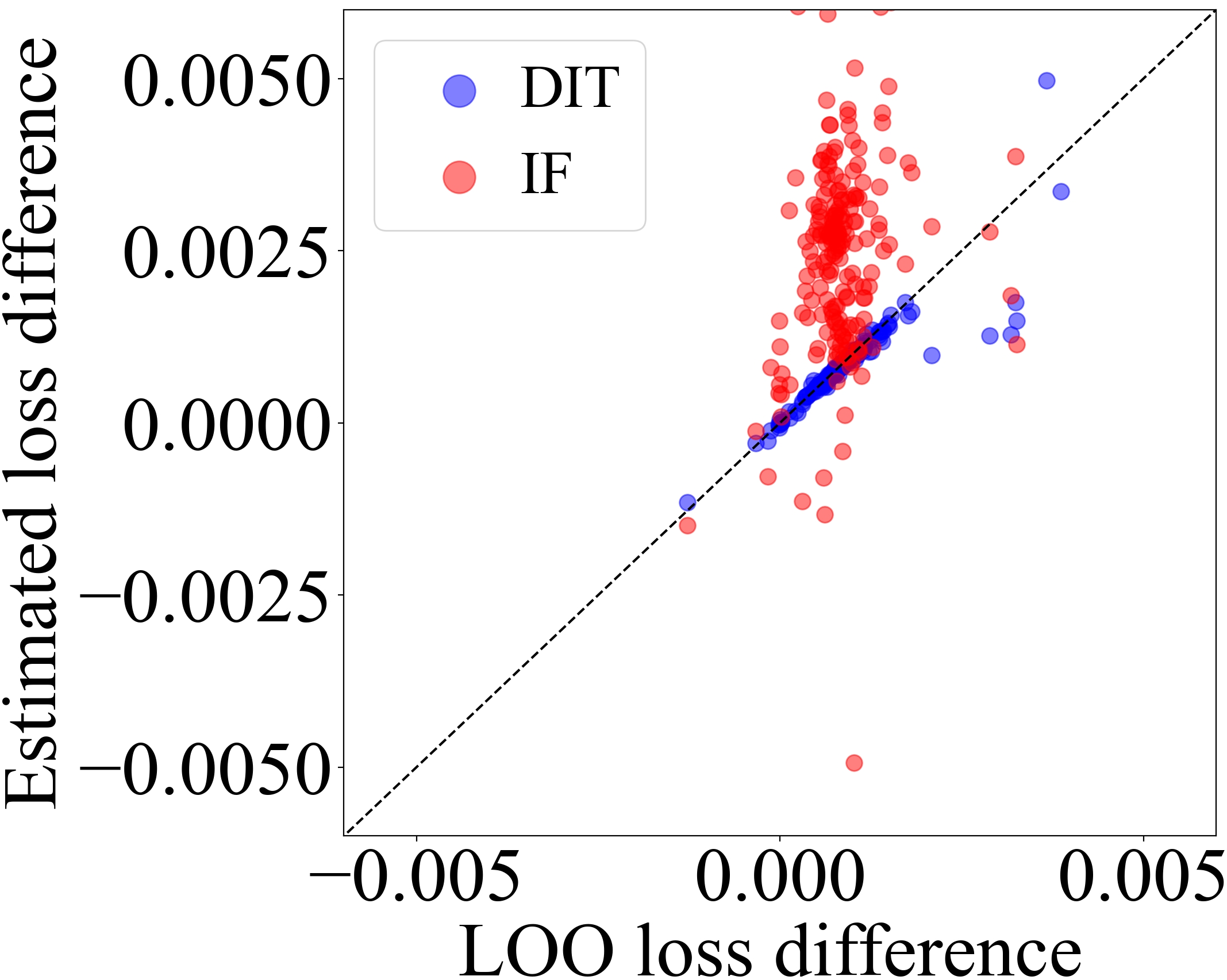}
		\label{fig:image6}
	}
	\caption{Comparison of influence estimates for DIT and IF vs. LOO ground truth across datasets using LR and DNN. The x-axis represents the ground truth influence values obtained from the LOO method. The y-axis shows DIT (blue) and IF (red) estimates.} 
	\label{fig:scatters}
\end{figure}

Table \ref{tab:precise_performance} shows several key findings: 
First, DIT consistently surpasses IF in accuracy across all datasets and model architectures, achieving correlations of up to 0.99 (Pearson and Spearman) for LR models. 
Second, DIT's advantage is most significant in complex settings like non-convex DNN and MNIST, where it maintains high correlations (0.90-0.98) while IF's performance drops significantly (0.25-0.33). 
Third, DIT shows superior robustness and reliability, with lower standard deviations (typically ±0.01-0.08) across runs compared to IF (up to ±0.33).

These results are visually shown in Figure \ref{fig:scatters}. DIT estimates closely align with the $y=x$ line, indicating superior accuracy to IF, especially with non-convex models and complex datasets.

\subsection{Pattern-Specific Influence Estimation} \label{secpattern} 
To further evaluate DIT's effectiveness across different patterns, we conducted a pattern-specific performance analysis comparing DIT against IF with LOO as ground truth using the MNIST dataset with DNN architecture. Table \ref{tab:precise_performance_pattern} presents the comparative results across multiple metrics. 

The pattern-specific analysis reveals three key findings. First, DIT significantly outperforms IF across all patterns. Second, DIT shows remarkable robustness by maintaining strong positive correlations for highly fluctuating samples where IF shows negative correlations. Third, DIT exhibits consistent performance across all correlation metrics with minimal variations, underlining its reliability for practical applications.

\begin{table*}[t]
	\small
	\centering
	\caption{ Kendall's Tau correlations across training stages}
	\label{tab:influence_stability}
	\setlength{\tabcolsep}{1.5pt}
	\begin{tabular}{llcccccc}
		\hline
		Model & Dataset & Early-Middle & Early-Late & Middle-Late & Early-Full & Middle-Full & Late-Full \\
		\hline
		\multirow{3}{*}{LR} 
		& Adult & 0.64 ± 0.14& 0.62 ± 0.08 & 0.79 ± 0.14 & 0.81 ± 0.05 & \textbf{0.82 ± 0.12}& 0.79± 0.05\\
		& 20News & 0.79 ± 0.12& 0.78 ± 0.10 & 0.79 ± 0.09 &\textbf{ 0.91 ± 0.02} & 0.88± 0.10&0.86± 0.12\\
		& MNIST &0.43 ± 0.14 & 0.15 ± 0.12 & 0.35 ± 0.14 & 0.71 ± 0.08 & \textbf{0.72± 0.09 }& 0.30 ± 0.14 \\
		& EMNIST &0.73 ± 0.04& 0.40 ± 0.16 & 0.51 ± 0.18 & 0.83 ± 0.03 & \textbf{0.89± 0.02} & 0.49 ± 0.17 \\
		\hline
		\multirow{3}{*}{DNN} 
		& Adult & 0.61 ± 0.11 & 0.41 ± 0.15 & 0.70 ± 0.06 & 0.7 ± 0.09 & \textbf{0.87 ± 0.04} & 0.69 ± 0.08 \\
		& 20news & 0.66 ± 0.06 & 0.57± 0.07 & 0.76 ± 0.05 & 0.81 ± 0.03 & \textbf{0.82 ± 0.04 }& 0.76 ± 0.04 \\
		& MNIST & 0.56 ± 0.06 & 0.18 ± 0.21 & 0.20 ± 0.25 & 0.74 ± 0.03 & \textbf{0.81± 0.04 }& 0.20 ± 0.25\\
		& EMNIST & 0.60 ± 0.12 & 0.40 ± 0.20 & 0.59 ± 0.21& 0.69 ± 0.11 & \textbf{0.84± 0.07 }& 0.63± 0.17\\
		\hline
	\end{tabular}
\end{table*}

\begin{table*}[t!]
	\small
	\centering
	\caption{Number of correctly identified flipped samples }
	\label{tab:flipmnist}
	\setlength{\tabcolsep}{2.5pt}
	\begin{tabular}{cccccccc}
		\hline
		Flipped & Model  & IF &  Full DIT &  LOO  &  \begin{tabular}[c]{@{}c@{}} DIT  \\  First Epoch \end{tabular}&  \begin{tabular}[c]{@{}c@{}}DIT\\  Mid Epoch \end{tabular}&  \begin{tabular}[c]{@{}c@{}} DIT\\   Last Epoch\end{tabular}\\
		\hline
		\multirow{3}{*}{5\%} & LR & 10.50 ± 0.50 & \textbf{10.94 ± 0.90} & \textbf{10.94 ± 0.90} & 10.56 ± 1.22 & 10.88 ± 0.78 & 10.88 ± 0.78 \\
		& DNN & 2.94 ± 2.01 & 9.06 ± 1.85 & 8.81 ± 1.98 & 8.25 ± 2.33 & 8.88 ± 2.09 & \textbf{9.38 ± 1.98} \\
		& CNN & 5.88 ± 2.26 & 10.50 ± 1.32 & 10.44 ± 1.32 & 8.75 ± 2.11 & 10.69 ± 1.16 & \textbf{11.06 ± 1.32} \\
		\hline
		\multirow{3}{*}{10\%} & LR & 23.44 ± 0.93 & \textbf{23.50 ± 1.00} & \textbf{23.50 ± 1.00} & 22.56 ± 1.54 & \textbf{23.50 ± 1.06} & 23.38 ± 1.00 \\
		& DNN & 7.50 ± 3.34 & 20.75 ± 3.01 & 19.94 ± 3.77 & 20.31 ± 2.78 & 20.50 ± 3.22 & \textbf{21.31 ± 3.77} \\
		& CNN & 15.00 ± 2.83 & 21.81 ± 3.11 & 21.75 ± 3.11 & 18.44 ± 4.37 & 22.19 ± 2.81 & \textbf{23.56 ± 3.11} \\
		\hline
		\multirow{3}{*}{15\%} & LR & \textbf{36.06 ± 0.97} & \textbf{36.06 ± 1.14} & \textbf{36.06 ± 1.14} & 35.38 ± 1.62 & 35.69 ± 1.69 & 35.13 ± 1.14 \\
		& DNN & 12.63 ± 4.62 & 32.81 ± 3.47 & 32.50 ± 3.72 & 32.19 ± 3.40 & 32.56 ± 3.61 & \textbf{33.31 ± 3.72} \\
		& CNN & 23.44 ± 4.68 & 34.19 ± 4.17 & 34.19 ± 4.17 & 29.75 ± 5.93 & 34.56 ± 3.98 & \textbf{36.31 ± 4.17} \\
		\hline
		\multirow{3}{*}{20\%} & LR & 48.63 ± 1.11 & \textbf{48.69 ± 1.16} & \textbf{48.69 ± 1.16} & 47.94 ± 1.52 & 46.56 ± 3.12 & 42.94 ± 1.16 \\
		& DNN & 22.31 ± 6.14 & 45.31 ± 3.29 & 43.94 ± 5.20 & 44.13 ± 3.64 & 45.19 ± 3.30 & \textbf{45.56 ± 5.20} \\
		& CNN & 31.00 ± 5.79 & 46.19 ± 4.33 & 46.25 ± 4.35 & 41.50 ± 7.66 & 47.13 ± 3.35 & \textbf{48.69 ± 4.35} \\
		\hline
	\end{tabular}
\end{table*}

\subsection{Influence Dynamics and Similarity across Training Stages} \label{secinflusimi}
After validating DIT's accuracy in estimating sample influence, we used it to analyze the similarity of different training stages.
The training process was adaptively divided into early, middle, and late stages using change points identified in the overall training loss trajectory. Detailed experimental settings are provided in Appendix \ref{appstages}. Then, we set time windows based on stages and used DIT to compute sample influence within these windows. We then used Kendall's tau correlation to quantify the similarity of influence rankings between stages, with higher values indicating greater stability. Table \ref{tab:influence_stability} presents these correlations.

Table \ref{tab:influence_stability} shows several key insights.
First, sample influence evolves significantly throughout training, as evidenced by the consistently low correlations between early and late stages (Early-Late column). This challenges the static influence measurement methods and highlights the necessity for time-aware methods like DIT.
Second, mid-training influence strongly correlates with full-training influence across all datasets and models. This suggests that influential samples can be identified before convergence. Mid-training analysis may suffice for estimating full-training sample influence, potentially reducing computational costs. These insights have significant implications for data selection and curriculum learning strategies. 
Third, for a given dataset, the patterns of influence ranking changes at different stages are similar across different model architectures when accounting for standard deviations. This consistency suggests that the influence of samples is largely determined by the inherent dataset rather than being heavily model-dependent.

\subsection{Applications of DIT} \label{secappdit} 
\paragraph{Flipped label Sample Detection}
To show the practical utility of DIT, we applied it to detecting flipped labels in a binary classification problem using the MNIST dataset (distinguishing between digits `1' and `7'). We randomly selected and flipped labels for 5\%, 10\%, 15\%, and 20\% of training data, corresponding to 12, 25, 38, and 51 samples, respectively. Models were then trained on these partially corrupted datasets. We calculated influence using six methods: full-process DIT, IF, LOO, and epoch-specific DIT (first, middle, and last epochs). For each method, we ranked training samples by their negative influence and evaluated the top-$k$ samples, where $k$ equals the number of deliberately flipped samples. This approach allows us to assess each method's ability to identify mislabeled samples accurately. Table \ref{tab:flipmnist} presents results averaged over 16 runs.

First, DIT consistently outperforms IF across all scenarios, often matching or closely approaching the LOO performance. DIT maintains its performance advantage across varying levels of label noise (5\% to 20\%).
Second, the performance gap between DIT and IF widens as model complexity increases (LR $<$ DNN $<$ CNN), highlighting DIT's robustness to non-convexity.
Third, later training stages generally yield better detection accuracy, particularly for complex models. As models converge, the influence of mislabeled samples becomes more distinguishable relative to correctly labeled ones.
These findings collectively show DIT's effectiveness as a powerful tool for enhancing model robustness and sample quality assessment, particularly in complex, real-world machine learning scenarios.

\paragraph{Scaling to large-scale Models}
Large-scale models present unique challenges for influence analysis. With hundreds of millions of parameters and inherent non-convexity, they fundamentally challenge traditional influence functions that rely on linear approximations. Additionally, the vast number of parameters makes computing the inverse Hessian matrix computationally infeasible.  

To validate DIT's effectiveness in large-scale model scenarios, we conducted experiments using Vision Transformer (ViT-B/16) with 86M parameters on ImageNet dataset. We evaluated DIT's capability in detecting corrupted labels across varying corruption levels. Specifically, we created corrupted versions of ImageNet by selecting two visually distinct classes (`brambling' and `daisy') and flipping the labels of $X$ brambling samples to daisy, where $X \in \{100, 150, 200, 250, 300\}$. Following the standard pre-train and then fine-tuning paradigm, we fine-tuned the ViT model for one epoch on each corrupted dataset. For each test, we applied DIT to a pool of 1000 samples, comprising $X$ flipped samples and randomly selected $(1000-X)$ benign samples. We then ranked all samples by their DIT influence value and examined how many samples with flipped labels appeared among the $X$ samples with the lowest value.

\begin{table}[ht]
	\centering
	\caption{Flipped Label Detection Results on ViT-B/16}
	\label{tab:vit_detection}
	\setlength{\tabcolsep}{3.5pt}    
	\begin{tabular}{ccc}
		\toprule
		Flipped Labels (X) & Correctly Detected & Detection Rate \\
		\midrule
		100 & 98 & 98.00\% \\
		150 & 148 & 98.67\% \\
		200 & 199 & 99.50\% \\
		250 & 247 & 98.80\% \\
		300 & 299 & 99.67\% \\
		\bottomrule
	\end{tabular}
\end{table}

The results in Table \ref{tab:vit_detection} show DIT's exceptional performance in large-scale scenarios, maintaining detection rates above 98\% across all test cases. This high accuracy, achieved with minimal computational overhead, showcases DIT's unique advantages for large-scale models:  compatibility with the pre-train then fine-tune paradigm, efficient analysis using only one epoch of fine-tuning information, and robust performance even with complex model architectures. These results validate DIT's scalability and suggest its potential as a practical tool for quality control in large-scale model fine-tuning pipelines.

\section{Related Works}
Estimating training sample influence is crucial for model optimization and interpretability. While LOO provides ground truth, it is computationally intensive. Influence functions~\cite{koh2017understanding} and recent extensions ~\cite{guo2021fastif,schioppa2022scaling,choe2024your}  offer a faster alternative, estimating the impact of removing a single training sample on model performance at convergence. However, their effectiveness is limited in non-convex and non-convergent scenarios~\cite{basu2021influence}. 

Shapley Value-based approaches ~\cite{ghorbani2019data} provide a robust, equitable valuation of individual sample contributions by considering all possible subsets of training data. Efficient approximation algorithms~\cite{jia2019towards,jia2021scalability,xu2021validation} and domain-specific extensions~\cite{schoch2022cs, sun2023shapleyfl,fan2022improving} have improved scalability, but remain computationally expensive for large-scale problems.

Data cleansing and pruning focus on removing noisy or irrelevant data. SGD-influence ~\cite{hara2019data} analyzes the gradient descent process and estimates sample influence across the entire training trajectory. Our proposed DIT extends this approach, enabling influence estimation within arbitrary time windows and providing detailed experimental evaluation. Forgetting events~\cite{toneva2018empirical} and early-training scores~\cite{paul2021deep} enable efficient data pruning. MOSO ~\cite{tan2024data} identifies less informative samples via gradient deviations, and YOCO ~\cite{he2023you} enables flexible resizing of condensed datasets.

Despite these advancements, analyzing sample influence within arbitrary time windows during training remains challenging. DIT addresses this gap by providing a computationally efficient method for dynamic influence tracking without relying on convexity and convergence assumptions. It enables multidimensional influence measurement in a single training process, offering a comprehensive understanding of sample importance.

\section{Conclusion}
This paper introduces Dynamic Influence Tracker (DIT), a novel approach for fine-grained estimation of individual training sample influence within arbitrary time windows during training. Our method's query-based design enables multi-faceted analysis of sample influence on various aspects of model performance effectively. Our theoretical analysis provides error bounds without assuming convexity and convergence. Extensive experimental results reveal patterns in influence dynamics and show that DIT consistently outperforms existing methods in influence estimation accuracy, particularly for complex models and datasets.

\bibliographystyle{IEEEtran}
\bibliography{citation}

\onecolumn
\clearpage
\appendices
\section{Detailed Derivation of Parameter Change Estimation}\label{app:detailed_derivation}
We start from Eq.(\ref{eqdiff}), which establishes the relationship:

\begin{align}\label{eq101}
	&\theta_{-j}^{[t+1]} - \theta^{[t+1]} = (\theta_{-j}^{[t]} - \theta^{[t]}) - \frac{\eta_t}{|S_t|} (\sum_{i \in S_t\setminus\{j\}} g(z_i; \theta_{-j}^{[t]}) - \sum_{i \in S_t} g(z_i; \theta^{[t]}))\\
	&= (\theta_{-j}^{[t]} - \theta^{[t]}) - \frac{\eta_t}{|S_t|} (\sum_{i \in S_t\setminus\{j\}} g(z_i; \theta_{-j}^{[t]}) -\sum_{i \in S_t\setminus\{j\}}g(z_i; \theta^{[t]})-\mathbf{1}_{j \in S_t}g(z_i; \theta^{[t]}))\\
	&= (\theta_{-j}^{[t]} - \theta^{[t]}) - \frac{\eta_t}{|S_t|} (\sum_{i \in S_t\setminus\{j\}} g(z_i; \theta_{-j}^{[t]}) -\sum_{i \in S_t\setminus\{j\}}g(z_i; \theta^{[t]}))+\frac{\eta_t}{|S_t|}\mathbf{1}_{j \in S_t}g(z_i; \theta^{[t]})\\
	&= (\theta_{-j}^{[t]} - \theta^{[t]}) - \frac{\eta_t}{|S_t|} \sum_{i \in S_t\setminus\{j\}} (g(z_i; \theta_{-j}^{[t]}) -g(z_i; \theta^{[t]}))+\frac{\eta_t}{|S_t|}\mathbf{1}_{j \in S_t}g(z_i; \theta^{[t]}),
\end{align}
where $\mathbf{1}_{j \in S_t}$ is an indicator function that equals 1 if $j \in S_t$, otherwise 0.

Using Eq.(\ref{eqht}), we have:
\begin{equation} \label{eq111}
	\sum_{i \in S_t\setminus\{j\}}( g(z_i; \theta_{-j}^{[t]}) - g(z_i; \theta^{[t]}) )\approx \sum_{i \in S_t\setminus\{j\}} \nabla_\theta g(z_i; \theta^{[t]})^T (\theta_{-j}^{[t]} - \theta^{[t]}),
\end{equation}
Following Eq.(\ref{eqg}) and Assumption (A4) detailed in Appendix \ref{app:estimation_error_proof}, we have:
\begin{equation} \label{eq121}
	\sum_{i\in S_t\backslash\{j\}} \nabla_\theta g(z_i; \theta^{[t]})^T(\theta_{-j}^{[t]} - \theta^{[t]})=|S_t| H_{-j}^{[t]}(\theta_{-j}^{[t]} - \theta^{[t]})\approx |S_t| H^{[t]}(\theta_{-j}^{[t]} - \theta^{[t]}).
\end{equation}
Combining Eq.(\ref{eq111}) and Eq.(\ref{eq121}), we have:
\begin{equation} \label{eq112}
	\sum_{i \in S_t\setminus\{j\}}( g(z_i; \theta_{-j}^{[t]}) - g(z_i; \theta^{[t]}) )\approx |S_t| H^{[t]}(\theta_{-j}^{[t]} - \theta^{[t]}).
\end{equation}
Applying Eq.~(\ref{eq112}) to Eq.~(\ref{eq101}), we have the final result:
\begin{align}\label{eq102}
	&\theta_{-j}^{[t+1]} - \theta^{[t+1]}= (\theta_{-j}^{[t]} - \theta^{[t]}) - \frac{\eta_t}{|S_t|} \sum_{i \in S_t\setminus\{j\}} (g(z_i; \theta_{-j}^{[t]}) -g(z_i; \theta^{[t]}))+\frac{\eta_t}{|S_t|}\mathbf{1}_{j \in S_t}g(z_i; \theta^{[t]}) \\
	& \approx (\theta_{-j}^{[t]} - \theta^{[t]}) - \frac{\eta_t}{|S_t|} (|S_t| H^{[t]}(\theta_{-j}^{[t]} - \theta^{[t]}) )+\frac{\eta_t}{|S_t|}\mathbf{1}_{j \in S_t}g(z_i; \theta^{[t]})\\
	&= (\theta_{-j}^{[t]} - \theta^{[t]}) - \eta_t H^{[t]}(\theta_{-j}^{[t]} - \theta^{[t]}) +\frac{\eta_t}{|S_t|}\mathbf{1}_{j \in S_t}g(z_i; \theta^{[t]})\\
	&= (I- \eta_t H^{[t]})(\theta_{-j}^{[t]} - \theta^{[t]}) +\frac{\eta_t}{|S_t|}\mathbf{1}_{j \in S_t}g(z_i; \theta^{[t]}).
\end{align}
This derivation confirms the correctness of Eq. (\ref{eq14}), including the last term.

\clearpage
\section{Estimation Error Analysis without Convexity Assumptions} \label{app:estimation_error_proof}

\begin{theorem}[Error Bound for DIT Parameter Change]
	Let $\Delta \theta_{-j}^{[t_1,t_2]}$ be the true influence of excluding sample $z_j$ on the model parameters over the interval $[t_1, t_2]$ during SGD training. Let $\widehat{\Delta \theta}_{-j}^{[t_1,t_2]}$ be its approximation using DIT. Under the following assumptions:
	\begin{enumerate}[label=(A\arabic*)]
		\item Lipschitz Continuity of Gradient: The gradient $\nabla \ell(z_i; \theta)$ is Lipschitz continuous with constant $L_g$:
		$\| \nabla \ell(z_i; \theta_1) - \nabla \ell(z_i; \theta_2) \| \leq L_g \| \theta_1 - \theta_2 \|$, $ \forall \theta_1, \theta_2 \in \Theta, \forall i$.
		\item Lipschitz Continuity of Hessian: The Hessian $\nabla^2 \ell(z_i; \theta)$ is Lipschitz continuous with constant $L_H$:
		$\| \nabla^2 \ell(z_i; \theta_1) - \nabla^2 \ell(z_i; \theta_2) \| \leq L_H \| \theta_1 - \theta_2 \|$, $ \forall \theta_1, \theta_2 \in \Theta, \forall i$.
		\item Learning Rate Bound: The learning rate satisfies $\eta_t \leq \frac{1}{L_H}$ for all $t$.
		\item Hessian Approximation Error: The Hessian approximation error is bounded: $	\| H^{[t]} - {H}_{-j}^{[t]} \| \leq \epsilon_H$, $ \forall t$,
		where ${H}_{-j}^{[t]}= \frac{1}{|S_t \setminus \{j\}|} \sum_{i \in S_t \setminus \{j\}} \nabla^2 \ell(z_i; \theta^{[t]})$ is the empirical Hessian over the mini-batch.
		\item Gradient Norm Bound: For all $\theta \in \Theta$ and all $z_i$: $\| \nabla \ell(z_i; \theta) \| \leq G$.
		\item Parameter Difference Bound: There exists a constant $M > 0$ such that: $\| \theta_{-j}^{[t]} - \theta^{[t]} \| \leq M$, $ \forall t \in [t_1, t_2]$.
		\item Bounded Hessian Norm: For all $\theta \in \Theta$ and all $ z_i$: $	\| \nabla^2 \ell(z_i; \theta) \| \leq M_H$.
	\end{enumerate}
	Then, the expected estimation error is bounded as follows:
	\begin{equation}
		\mathbb{E} \left[ \left\| \Delta \theta_{-j}^{[t_1,t_2]} - \widehat{\Delta \theta}_{-j}^{[t_1,t_2]} \right\| \right] \leq \frac{\tilde{B}}{M_H} \left( e^{M_H \eta_{\max} (t_2 + 1)} + e^{M_H \eta_{\max} (t_1 + 1)} - 2 \right)
	\end{equation}
	where: $\eta_{\max} = \max_{t \in [t_1,t_2]} \eta_t$, $\tilde{B} = \frac{L_H M^2}{2} + \epsilon_H M$, $n$ is the total number of samples in the dataset.
\end{theorem}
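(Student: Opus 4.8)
The plan is to bound the per-step error introduced by the two approximations used in deriving the estimator—the first-order Taylor expansion of the gradient difference (Eq.~\eqref{eqg}) and the replacement of the leave-one-out Hessian $H_{-j}^{[t]}$ by the full mini-batch Hessian $H^{[t]}$ (assumption (A4))—and then propagate these errors through the recursion \eqref{eq14} using a discrete Gr\"onwall-type argument. Concretely, I would define the error at step $t$ as $e^{[t]} = (\theta_{-j}^{[t]} - \theta^{[t]}) - \widehat{\Delta\theta}_{-j}^{[0,t]}$, so that the target quantity $\Delta\theta_{-j}^{[t_1,t_2]} - \widehat{\Delta\theta}_{-j}^{[t_1,t_2]}$ equals $e^{[t_2]} - e^{[t_1]}$ (using the decomposition in Eq.~\eqref{equdelta}), and triangle-inequality it into $\|e^{[t_2]}\| + \|e^{[t_1]}\|$.

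The key steps, in order: (1) Write the exact one-step recursion for $\theta_{-j}^{[t+1]} - \theta^{[t+1]}$ from Eq.~\eqref{eqdiff} and subtract the estimator's recursion Eq.~\eqref{eq14}; the difference is $e^{[t+1]} = (I - \eta_t H^{[t]}) e^{[t]} + \eta_t r^{[t]}$, where $r^{[t]}$ collects the residuals. (2) Bound the residual $\|r^{[t]}\|$: the Taylor-remainder term is controlled by the Lipschitz-Hessian assumption (A2), giving a term of order $\tfrac{L_H}{2}\|\theta_{-j}^{[t]} - \theta^{[t]}\|^2 \le \tfrac{L_H M^2}{2}$ via (A6); the Hessian-substitution term is $\|(H^{[t]} - H_{-j}^{[t]})(\theta_{-j}^{[t]}-\theta^{[t]})\| \le \epsilon_H M$ via (A4) and (A6). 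Together $\|r^{[t]}\| \le \tilde B = \tfrac{L_H M^2}{2} + \epsilon_H M$. (3) Bound the propagation factor: $\|I - \eta_t H^{[t]}\| \le 1 + \eta_t M_H \le e^{\eta_t M_H} \le e^{\eta_{\max} M_H}$ using (A7) and $1+x\le e^x$. (4) Unroll the recursion from $0$ to $t$ with $e^{[0]}=0$, yielding $\|e^{[t]}\| \le \tilde B \sum_{s=0}^{t-1} \eta_s \prod_{k=s+1}^{t-1}(1+\eta_k M_H) \le \tilde B\, \eta_{\max}\sum_{s=0}^{t-1} e^{\eta_{\max} M_H (t-1-s)}$, and sum the geometric series to get a bound of the form $\tfrac{\tilde B}{M_H}(e^{M_H\eta_{\max}(t+1)} - 1)$ after absorbing constants. (5) Apply this at $t=t_1$ and $t=t_2$, add, and take expectations (the expectation handles the randomness of the mini-batches $S_t$, and since all bounds are deterministic worst-case bounds under the assumptions, $\mathbb{E}$ passes through trivially) to obtain the stated inequality.

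The main obstacle I anticipate is step (2)–(4) done carefully enough that the constants match the claimed form: in particular, controlling the Taylor remainder requires that the gradient difference $g(z_i;\theta_{-j}^{[t]}) - g(z_i;\theta^{[t]})$ minus its linearization is second-order in $\|\theta_{-j}^{[t]}-\theta^{[t]}\|$, which needs the integral form of Taylor's theorem together with (A2); and then the Gr\"onwall unrolling must be arranged so that the geometric sum $\eta_{\max}\sum e^{M_H\eta_{\max}(t-1-s)}$ collapses to exactly $\tfrac{1}{M_H}(e^{M_H\eta_{\max}(t+1)}-1)$, which requires a slightly generous bounding of $\eta_{\max}\cdot\tfrac{e^{M_H\eta_{\max}t}-1}{e^{M_H\eta_{\max}}-1}$ by $\tfrac{1}{M_H}(e^{M_H\eta_{\max}(t+1)}-1)$. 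A secondary subtlety is that assumption (A6) (boundedness of $\|\theta_{-j}^{[t]}-\theta^{[t]}\|$) is itself something one would ideally derive rather than assume, but since it is given as a hypothesis I would simply invoke it; likewise (A3) ensures the learning rate is small enough that the one-step map is well-behaved, though it is actually (A7) plus $1+x\le e^x$ that drives the contraction-factor bound. Everything else is routine bookkeeping with the triangle inequality and submultiplicativity of the operator norm.
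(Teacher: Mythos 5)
Your proposal is correct and follows essentially the same route as the paper's proof: the same error definition $e^{[t]}$, the same decomposition of the per-step residual into the Taylor remainder ($\tfrac{L_H M^2}{2}$ via (A2), (A6)) and the Hessian-substitution term ($\epsilon_H M$ via (A4), (A6)), the same recurrence $\|e^{[t]}\| \le (1+\eta_t M_H)\|e^{[t-1]}\| + \eta_t \tilde B$ unrolled with $1+x\le e^x$, and the same final triangle inequality $\|e^{[t_2]}\|+\|e^{[t_1]}\|$. The "slightly generous bounding" of the geometric sum you flag is exactly the step the paper takes (approximating $e^{M_H\eta_{\max}}-1\approx M_H\eta_{\max}$), so no further comparison is needed.
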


\begin{proof}
	\textbf{Step 1: Derivation of the Error Update Equation}
	
	Define the error at iteration $t$:
	\begin{equation}
		e^{[t]} = (\theta_{-j}^{[t]} - \theta^{[t]}) - \widehat{\Delta \theta}_{-j}^{[t]}
	\end{equation}
	where $ \widehat{\Delta \theta}_{-j}^{[t]} = \widehat{\Delta \theta}_{-j}^{[0,t]} $ is the approximation of the true parameter change $ \Delta \theta_{-j}^{[t]} $ using the DIT method.
	
	Our aim is to derive a recursive relation for $ e^{[t]} $ and then bound its expected norm.
	
	Consider the updates for $\theta^{[t]}$, $\theta_{-j}^{[t]}$, and $\widehat{\theta}_{-j}^{[t]}$:
	
	Original SGD Update:
	\begin{equation}
		\theta^{[t+1]} = \theta^{[t]} - \eta_t \tilde{g}^{[t]}, \quad \tilde{g}^{[t]} = \frac{1}{|S_t|} \sum_{i \in S_t} \nabla \ell(z_i; \theta^{[t]}).
	\end{equation}
	Leave-One-Out SGD Update:
	\begin{equation}
		\theta_{-j}^{[t+1]} = \theta_{-j}^{[t]} - \eta_t \tilde{g}_{-j}^{[t]}, \quad \tilde{g}_{-j}^{[t]} = \frac{1}{|S_t|} \sum_{i \in S_t \setminus \{j\}} \nabla \ell(z_i; \theta_{-j}^{[t]}).
	\end{equation}
	Approximate Leave-One-Out Update (DIT Method):
	\begin{equation}
		\widehat{\theta}_{-j}^{[t+1]} = \widehat{\theta}_{-j}^{[t]} - \eta_t \left( \tilde{g}^{[t]} + H^{[t]} (\widehat{\theta}_{-j}^{[t]} - \theta^{[t]}) - \mathbf{1}_{\{j \in S_t\}} \frac{1}{|S_t|} \nabla \ell(z_j; \theta^{[t]}) \right).
	\end{equation}
	
	We derive the error update equation as follows:
	\begin{align}
		e^{[t]} - e^{[t-1]} = \eta_{t-1} \delta^{[t-1]},
	\end{align}
	where:
	\begin{equation}
		\delta^{[t-1]} = \left( \tilde{g}_{-j}^{[t-1]} - \tilde{g}^{[t-1]} \right) - H^{[t-1]} \widehat{\Delta \theta}_{-j}^{[t-1]} + \mathbf{1}_{\{j \in S_{t-1}\}} \frac{1}{|S_{t-1}|} \nabla \ell(z_j; \theta^{[t-1]}).
	\end{equation}
	or equivalently:
	\begin{equation}
		\delta^{[t]} = \left( \tilde{g}_{-j}^{[t]} - \tilde{g}^{[t]} \right) - H^{[t]} \widehat{\Delta \theta}_{-j}^{[t]} + \mathbf{1}_{\{j \in S_t\}} \frac{1}{|S_t|} \nabla \ell(z_j; \theta^{[t]}).
	\end{equation}
	\textbf{Step 2: Bounding $\| \delta^{[t]} \|$}
	
	We decompose $\delta^{[t]}$ and bound each term:
	
	\textbf{1. Difference in Stochastic Gradients:}
	\begin{equation}
		\tilde{g}_{-j}^{[t]} - \tilde{g}^{[t]} = \frac{1}{|S_t|} \left( \sum_{i \in S_t \setminus \{j\}} \left( \nabla \ell(z_i; \theta_{-j}^{[t]}) - \nabla \ell(z_i; \theta^{[t]}) \right) - \mathbf{1}_{\{j \in S_t\}} \nabla \ell(z_j; \theta^{[t]}) \right).
	\end{equation}
	
	Applying a first-order Taylor expansion to $\nabla \ell(z_i; \theta_{-j}^{[t]})$ for $i \neq j$:
	\begin{equation}
		\nabla \ell(z_i; \theta_{-j}^{[t]}) - \nabla \ell(z_i; \theta^{[t]}) = \nabla^2 \ell(z_i; \theta^{[t]})(\theta_{-j}^{[t]} - \theta^{[t]}) + r_{i,j}^{[t]},
	\end{equation}
	where, by Assumption (A2):
	\begin{equation}
		\| r_{i,j}^{[t]} \| \leq \frac{L_H}{2} \| \theta_{-j}^{[t]} - \theta^{[t]} \|^2
	\end{equation}
	Thus, we have:
	\begin{align}
		\tilde{g}_{-j}^{[t]} - \tilde{g}^{[t]} & = \frac{1}{|S_t|} \sum_{i \in S_t \setminus \{j\}} \nabla^2 \ell(z_i; \theta^{[t]}) (\theta_{-j}^{[t]} - \theta^{[t]}) + r_{i,j}^{[t]} - \mathbf{1}_{\{j \in S_t\}} \nabla \ell(z_j; \theta^{[t]}) \nonumber \\
		& = \frac{1}{|S_t|} \left( \sum_{i \in S_t \setminus \{j\}} r_{i,j}^{[t]} - \mathbf{1}_{\{j \in S_t\}} \nabla \ell(z_j; \theta^{[t]}) \right) + H_{-j}^{[t]} (\theta_{-j}^{[t]} - \theta^{[t]})
	\end{align}
	
	\textbf{2. Hessian Approximation Error:}
	\begin{equation}
		\|(H_{-j}^{[t]} - H^{[t]}) (\theta_{-j}^{[t]} - \theta^{[t]}) \| \leq \epsilon_H \| \theta_{-j}^{[t]} - \theta^{[t]} \|.
	\end{equation}
	according to Assumption (A4).
	
	\textbf{3. Combining Terms:}
	Substitute the approximations back into $ \delta^{[t]} $:
	\begin{align}
		\delta^{[t]} & = \left( \tilde{g}_{-j}^{[t]} - \tilde{g}^{[t]} \right) - H^{[t]} \widehat{\Delta \theta}_{-j}^{[t]} + \mathbf{1}_{\{j \in S_t\}} \frac{1}{|S_t|} \nabla \ell(z_j; \theta^{[t]}) \nonumber \\
		& = \left( \tilde{g}_{-j}^{[t]} - \tilde{g}^{[t]} \right) - H_{-j}^{[t]} (\theta_{-j}^{[t]} - \theta^{[t]}) + \left( H_{-j}^{[t]} - H^{[t]} \right) (\theta_{-j}^{[t]} - \theta^{[t]}) + \mathbf{1}_{\{j \in S_t\}} \frac{1}{|S_t|} \nabla \ell(z_j; \theta^{[t]}) \nonumber \\
		& = \frac{1}{|S_t|} \sum_{i \in S_t \setminus \{j\}} r_{i,j}^{[t]} + (H_{-j}^{[t]} - H^{[t]}) (\theta_{-j}^{[t]} - \theta^{[t]}) + H^{[t]} ((\theta_{-j}^{[t]} - \theta^{[t]}) - \Delta \widehat{\theta}_{-j}^{[t]} ) \nonumber \\
		& = \frac{1}{|S_t|} \sum_{i \in S_t \setminus \{j\}} r_{i,j}^{[t]} + (H_{-j}^{[t]} - H^{[t]}) (\theta_{-j}^{[t]} - \theta^{[t]}) + H^{[t]} e^{[t]}.
	\end{align}
	
	\textbf{4. Bounding $\| \delta^{[t]} \|$:}
	\begin{itemize}
		\item \textbf{First Term:}
		\begin{equation}
			\left\| \frac{1}{|S_t|} \sum_{i \in S_t \setminus \{j\}} r_{i,j}^{[t]} \right\| < \frac{L_H M^2}{2}.
		\end{equation}
		\item \textbf{Second Term:}
		\begin{equation}
			\left\| (H_{-j}^{[t]} - H^{[t]}) (\theta_{-j}^{[t]} - \theta^{[t]}) \right\| \leq \epsilon_H M.
		\end{equation}
		\item \textbf{Third Term:}
		\begin{equation}
			\left\| H^{[t]} e^{[t]} \right\| \leq M_H \| e^{[t]} \|.
		\end{equation}
	\end{itemize}

	Combining bounds, we can have:
	\begin{equation}
		\| \delta^{[t]} \| < \frac{L_H M^2}{2} + \epsilon_H M + M_H \| e^{[t]} \|.
	\end{equation}
	
	\textbf{Step 3: Error Update Equation}
	
	Using the error update:
	\begin{equation}
		e^{[t]} = e^{[t-1]} - \eta_t \delta^{[t-1]},
	\end{equation}
	we have:
	\begin{equation}
		\| e^{[t]} \| \leq \| e^{[t-1]} \| + \eta_t \| \delta^{[t-1]} \| < \| e^{[t-1]} \| + \eta_t \left( \frac{L_H M^2}{2} + \epsilon_H M + M_H \| e^{[t-1]} \| \right).
	\end{equation}
	
	Define:
	\begin{equation}
		a_t = 1 + \eta_t M_H, \quad b_t = \eta_t \left( \frac{L_H M^2}{2} + \epsilon_H M \right).
	\end{equation}
	
	Then:
	\begin{equation}
		\| e^{[t]} \| < a_t \| e^{[t-1]} \| + b_t.
	\end{equation}
	
	\textbf{Step 4: Taking Expectations}
	
	Taking expectations over the mini-batch sampling:
	\begin{equation}
		\mathbb{E} \left[ \| e^{[t]} \| \right] < a_t \mathbb{E} \left[ \| e^{[t-1]} \| \right] + b_t.
	\end{equation}
	
	Define:
	\begin{equation}
		\tilde{B} = \frac{L_H M^2}{2} + \epsilon_H M.
	\end{equation}
	Then:
	\begin{equation}
		\mathbb{E} \left[ \| e^{[t]} \| \right] < a_t \mathbb{E} \left[ \| e^{[t-1]} \| \right] + \eta_t \tilde{B}.
	\end{equation}
	
	\textbf{Step 5: Solving the Recurrence Relation}
	
	Unfolding the recurrence:
	\begin{equation}
		\mathbb{E} \left[ \| e^{[t]} \| \right] \leq \prod_{k=0}^{t} a_k \cdot \mathbb{E} \left[ \| e^{[0]} \| \right] + \sum_{s=0}^{t} \left( \prod_{k=s+1}^{t} a_k \right) b_s.
	\end{equation}
	Since $ e^{[0]} = 0 $, we have:
	\begin{equation}
		\mathbb{E} \left[ \| e^{[t]} \| \right] \leq \sum_{s=0}^{t} \left( \prod_{k=s+1}^{t} a_k \right) b_s.
	\end{equation}
	
	Assuming $ a_k \leq e^{M_H \eta_{\max}} $, we get:
	\begin{equation}
		\prod_{k=s+1}^{t} a_k \leq e^{M_H \eta_{\max} (t - s)}.
	\end{equation}
	
	Therefore:
	\begin{equation}
		\mathbb{E} \left[ \| e^{[t]} \| \right] \leq \tilde{B} \eta_{\max} \sum_{s=0}^{t} e^{M_H \eta_{\max} (t - s)}.
	\end{equation}
	
	Approximating the sum:
	\begin{equation}
		\mathbb{E} \left[ \| e^{[t]} \| \right] \leq \tilde{B} \eta_{\max} \cdot \frac{e^{M_H \eta_{\max} (t + 1)} - 1}{e^{M_H \eta_{\max}} - 1}.
	\end{equation}
	For small $ M_H \eta_{\max} $, $ e^{M_H \eta_{\max}} - 1 \approx M_H \eta_{\max} $, yielding:
	\begin{equation}
		\mathbb{E} \left[ \| e^{[t]} \| \right] \leq \frac{\tilde{B}}{M_H} \left( e^{M_H \eta_{\max} (t + 1)} - 1 \right).
	\end{equation}
	Substitute $t$ with $t_1$ and $t_2$ respectively:
	
	\begin{align}
		\mathbb{E} \left[ \| e^{[t_2]} \| \right] & \leq \frac{\tilde{B}}{M_H} \left( e^{M_H \eta_{\max} (t_2 + 1)} - 1 \right), \\
		\mathbb{E} \left[ \| e^{[t_1]} \| \right] & \leq \frac{\tilde{B}}{M_H} \left( e^{M_H \eta_{\max} (t_1 + 1)} - 1 \right).
	\end{align}
	
	\textbf{Step 6: Final Bound}
	
	The estimation error is:
	\begin{align}
		\mathbb{E} \left[ \left\| \Delta \theta_{-j}^{[t_1,t_2]} - \widehat{\Delta \theta}_{-j}^{[t_1,t_2]} \right\| \right] & \leq \mathbb{E} \left[ \| e^{[t_2]} \| \right] + \mathbb{E} \left[ \| e^{[t_1]} \| \right] \nonumber \\
		& \leq \frac{\tilde{B}}{M_H} \left( e^{M_H \eta_{\max} (t_2 + 1)} + e^{M_H \eta_{\max} (t_1 + 1)} - 2 \right)
	\end{align}
	
	This completes the proof.
\end{proof}
\begin{remark}
	Note that DIT applies to non-converged and non-convex models. The exponential form arises from the recursive nature of error propagation, where each SGD step compounds previous errors multiplicatively. Our analysis is the first to guarantee error bounds for non-converged, non-convex models during arbitrary time windows. The bounds are mathematical guarantees for the worst case, and experimental results show that DIT can achieve near-zero errors empirically.
\end{remark}
\begin{remark}
	The error bound provides several key insights:
	\begin{itemize}
		\item The error grows at most exponentially with both $t_1$ and $t_2$, highlighting the challenge of long-range influence estimation. The impact of $t_2$ is generally more significant as it represents the end of the time window.
		\item The Hessian approximation error $\epsilon_H$ directly impacts the overall error, emphasizing the importance of accurate Hessian estimation.
		\item The maximum learning rate $\eta_{\max}$ affects the error bound exponentially, suggesting that smaller learning rates might help control the estimation error.
		\item The bound depends on the Lipschitz constants of the gradient and Hessian ($L_g$ and $L_H$), indicating that smoother loss landscapes lead to more reliable influence estimates.
	\end{itemize}
	This theorem provides a theoretical foundation for understanding the limitations of influence estimation without assuming convexity and guides practical considerations in its application to large-scale machine learning problems.
\end{remark}

\clearpage

\section{DIT Toolkit}\label{app:dit_approximation}
The flexibility of query-based DIT allows for its application to a wide range of machine learning challenges. In this section, we provide a toolkit of query vectors that enables targeted investigations into critical aspects of model behavior, including gradient value, prediction changes, feature importance, and parameter importance. 
\subsection{DIT for Loss Value}
\begin{theorem}[DIT for Loss Value] \label{theoloss}
	Given a loss function $\ell(z; \theta)$, a time window $[t_1, t_2]$, a training sample $z_j$, and a test sample $z_{\text{test}}$, the Dynamic Influence Tracker with query function $q(t) = (\nabla_\theta \ell(z_{\text{test}}; \theta^{[t]})$ can be approximated as:
	\begin{equation}
		Q_{-j}^{[t_1,t_2]}(q) \approx [\ell(z_{\text{test}}; \theta_{-j}^{[t_2]}) - \ell(z_{\text{test}}; \theta_{-j}^{[t_1]})] - [\ell(z_{\text{test}}; \theta^{[t_2]}) - \ell(z_{\text{test}}; \theta^{[t_1]})],
		\label{eq:dit_loss_approx}
	\end{equation}
	where $\theta_{-j}^{[t]}$ denotes the model parameters at time $t$ when trained without sample $z_j$, and $\theta^{[t]}$ denotes the parameters when trained with all samples.
\end{theorem}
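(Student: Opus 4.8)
The plan is to establish the approximation in Theorem~\ref{theoloss} by recognizing that the quantity $\langle \nabla_\theta \ell(z_{\text{test}}; \theta^{[t]}), \Delta\theta_{-j}^{[t]} \rangle$ is precisely the first-order Taylor approximation of the loss difference $\ell(z_{\text{test}}; \theta_{-j}^{[t]}) - \ell(z_{\text{test}}; \theta^{[t]})$, evaluated once at $t = t_2$ and once at $t = t_1$, and then subtracting. So the first step is to write out the definition $Q_{-j}^{[t_1,t_2]}(q) = \langle q(t_2), \Delta\theta_{-j}^{[t_2]} \rangle - \langle q(t_1), \Delta\theta_{-j}^{[t_1]} \rangle$ with the specified query function substituted in, giving
\begin{equation}
	Q_{-j}^{[t_1,t_2]}(q) = \langle \nabla_\theta \ell(z_{\text{test}}; \theta^{[t_2]}), \Delta\theta_{-j}^{[t_2]} \rangle - \langle \nabla_\theta \ell(z_{\text{test}}; \theta^{[t_1]}), \Delta\theta_{-j}^{[t_1]} \rangle.
\end{equation}

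Next I would apply a first-order Taylor expansion of $\ell(z_{\text{test}}; \cdot)$ around $\theta^{[t]}$ evaluated at the counterfactual iterate $\theta_{-j}^{[t]} = \theta^{[t]} + \Delta\theta_{-j}^{[t]}$, namely
\begin{equation}
	\ell(z_{\text{test}}; \theta_{-j}^{[t]}) = \ell(z_{\text{test}}; \theta^{[t]}) + \langle \nabla_\theta \ell(z_{\text{test}}; \theta^{[t]}), \Delta\theta_{-j}^{[t]} \rangle + O(\|\Delta\theta_{-j}^{[t]}\|^2),
\end{equation}
so that $\langle \nabla_\theta \ell(z_{\text{test}}; \theta^{[t]}), \Delta\theta_{-j}^{[t]} \rangle \approx \ell(z_{\text{test}}; \theta_{-j}^{[t]}) - \ell(z_{\text{test}}; \theta^{[t]})$. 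Substituting this approximation at $t = t_2$ and $t = t_1$ into the displayed expression for $Q$ and regrouping the four loss terms yields exactly the right-hand side of Eq.~(\ref{eq:dit_loss_approx}). I would also note that $\Delta\theta_{-j}^{[t]}$ here should be read as the DIT estimator $\widehat{\Delta\theta}_{-j}^{[t]}$ in practice, so the stated approximation chains the Taylor error with the parameter-change estimation error bounded by the Theorem in Appendix~\ref{app:estimation_error_proof}; making this two-source error accounting explicit (Taylor remainder plus propagated estimation error) is the cleanest way to state what ``$\approx$'' means.

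The main obstacle is that this is fundamentally an approximation statement rather than an exact identity, so the ``proof'' is really a justification of why the first-order term dominates: I would want to argue that $\|\Delta\theta_{-j}^{[t]}\|$ is small — of order $O(1/N)$ heuristically, since removing one of $N$ samples perturbs the trajectory by a correspondingly small amount — making the quadratic Taylor remainder second-order small, and that under the Lipschitz-gradient assumption (A1) already in force the remainder is controlled by $\tfrac{L_g}{2}\|\Delta\theta_{-j}^{[t]}\|^2$. The delicate point is that no uniform smallness of $\|\Delta\theta_{-j}^{[t]}\|$ is actually proven in the paper (only boundedness by $M$ via assumption (A6)), so the honest version of the argument is: the approximation error is $O(\|\Delta\theta_{-j}^{[t_1]}\|^2 + \|\Delta\theta_{-j}^{[t_2]}\|^2)$ from the Taylor step, plus the parameter-change estimation error from Appendix~\ref{app:estimation_error_proof} if one uses $\widehat{\Delta\theta}$, and I would present the theorem as exact up to these two explicitly identified error terms rather than attempting an unconditional bound. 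Everything else is algebraic bookkeeping of the four inner-product/loss terms, which I would not grind through in detail.
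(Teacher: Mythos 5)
Your proposal follows exactly the paper's own proof: substitute the query function into the definition of $Q_{-j}^{[t_1,t_2]}(q)$, apply a first-order Taylor expansion of $\ell(z_{\text{test}};\cdot)$ around $\theta^{[t_2]}$ and $\theta^{[t_1]}$, and regroup the four loss terms. Your additional remarks on making the two error sources (Taylor remainder plus parameter-change estimation error) explicit go beyond what the paper states but do not change the argument.
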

\begin{proof}
	We begin with the definition of the Query-Based Dynamic Influence Tracker:
	\begin{equation}		\label{eq:dit_def_loss}
		Q_{-j}^{[t_1,t_2]}(q) = \left\langle q(t_2), \Delta\theta_{-j}^{[t_2]} \right\rangle - \left\langle q(t_1), \Delta\theta_{-j}^{[t_1]} \right\rangle
	\end{equation}
	where $\Delta\theta_{-j}^{[t]} = \theta_{-j}^{[t]} - \theta^{[t]}$.
	
	Substituting $q(t) = \nabla_\theta \ell(z_{\text{test}}; \theta^{[t]})$ into Eq. (\ref{eq:dit_def_loss}):
	\begin{equation}\label{eq:dit_expanded}
		Q_{-j}^{[t_1,t_2]}(q) = \left\langle \nabla_\theta \ell(z_{\text{test}}; \theta^{[t_2]}), \theta_{-j}^{[t_2]} - \theta^{[t_2]} \right\rangle - \left\langle \nabla_\theta \ell(z_{\text{test}}; \theta^{[t_1]}), \theta_{-j}^{[t_1]} - \theta^{[t_1]} \right\rangle. 
	\end{equation}
	
	Apply the first-order Taylor expansion of $\ell(z_{\text{test}}; \theta)$ around $\theta^{[t_2]}$ and $\theta^{[t_1]}$:
	\begin{align}
		\ell(z_{\text{test}}; \theta_{-j}^{[t_2]}) &\approx \ell(z_{\text{test}}; \theta^{[t_2]}) + \langle \nabla_\theta \ell(z_{\text{test}}; \theta^{[t_2]}), \theta_{-j}^{[t_2]} - \theta^{[t_2]} \rangle \label{eq:taylor_expansions1} \\
		\ell(z_{\text{test}}; \theta_{-j}^{[t_1]}) &\approx \ell(z_{\text{test}}; \theta^{[t_1]}) + \langle \nabla_\theta \ell(z_{\text{test}}; \theta^{[t_1]}), \theta_{-j}^{[t_1]} - \theta^{[t_1]} \rangle
		\label{eq:taylor_expansions2}
	\end{align}
	
	Rearranging Eq. (\ref{eq:taylor_expansions1}) and Eq. (\ref{eq:taylor_expansions2}):
	\begin{align}
		\langle \nabla_\theta \ell(z_{\text{test}}; \theta^{[t_2]}), \theta_{-j}^{[t_2]} - \theta^{[t_2]} \rangle &\approx \ell(z_{\text{test}}; \theta_{-j}^{[t_2]}) - \ell(z_{\text{test}}; \theta^{[t_2]}) \\
		\langle \nabla_\theta \ell(z_{\text{test}}; \theta^{[t_1]}), \theta_{-j}^{[t_1]} - \theta^{[t_1]} \rangle &\approx \ell(z_{\text{test}}; \theta_{-j}^{[t_1]}) - \ell(z_{\text{test}}; \theta^{[t_1]})
		\label{eq:rearranged_taylor}
	\end{align}
	Substituting these approximations back into Eq. (\ref{eq:dit_expanded}):
	\begin{align}
		Q_{-j}^{[t_1,t_2]}(q) &\approx [\ell(z_{\text{test}}; \theta_{-j}^{[t_2]}) - \ell(z_{\text{test}}; \theta^{[t_2]})] - [\ell(z_{\text{test}}; \theta_{-j}^{[t_1]}) - \ell(z_{\text{test}}; \theta^{[t_1]})] \\
		&= [\ell(z_{\text{test}}; \theta_{-j}^{[t_2]}) - \ell(z_{\text{test}}; \theta_{-j}^{[t_1]})] - [\ell(z_{\text{test}}; \theta^{[t_2]}) - \ell(z_{\text{test}}; \theta^{[t_1]})]
		\label{eq:dit_approx_final}
	\end{align}
	This completes the proof of Theorem \ref{theoloss}.
\end{proof}
This theorem provides a foundation for understanding how individual training samples affect the model's loss on specific test points over time. The right-hand side of Eq. (\ref{eq:dit_loss_approx}) represents the difference between the loss changes with and without sample $z_j$, offering a direct measure of the sample's influence on model performance.

\textbf{Extension to Test Sets:}
We can extend this concept to consider an entire test set $D_{\text{test}} = \{z_1, \ldots, z_M\}$. Define the query function as:
\begin{equation} 
	q(t) = \frac{1}{M} \sum_{i=1}^M \nabla_\theta \ell(z_i; \theta^{[t]}), \quad z_i \in D_{\text{test}}. 
\end{equation}
With this choice, the DIT approximates the change in average test loss:
\begin{equation} 		\label{eq:dit_test_set}
	\begin{split} 
		Q_{-j}^{[t_1,t_2]}(q) &\approx \frac{1}{M} \sum_{i=1}^M \left[ \ell(z_i; \theta_{-j}^{[t_2]}) - \ell(z_i; \theta_{-j}^{[t_1]}) \right] - \frac{1}{M} \sum_{i=1}^M \left[ \ell(z_i; \theta^{[t_2]}) - \ell(z_i; \theta^{[t_1]}) \right] \\ 
		&= \left[ \mathcal{L}_{\text{test}}(\theta_{-j}^{[t_2]}) - \mathcal{L}_{\text{test}}(\theta_{-j}^{[t_1]}) \right] - \left[\mathcal{L}_{\text{test}}(\theta^{[t_2]}) - \mathcal{L}_{\text{test}}(\theta^{[t_1]}) \right], 
	\end{split} 
\end{equation}
where $\mathcal{L}_{\text{test}}(\theta^{[t]})=\frac{1}{M} \sum_{i=1}^M \ell(z_i; \theta^{[t]}) $ is the average test loss.

\subsection{DIT for Prediction Changes}
\begin{theorem}[DIT for Prediction Changes] \label{theopre}
	Given a model function $f(x; \theta)$, a time window $[t_1, t_2]$, a training sample $z_j$, and a test input $x_{\text{test}}$, the Dynamic Influence Tracker with query function $q(t) = \nabla_\theta f(x_{\text{test}}; \theta^{[t]})$ can be approximated as:
	\begin{equation} \label{eq:dit_prediction_approx}
		Q_{-j}^{[t_1,t_2]}(q) \approx \left[ f(x_{\text{test}}; \theta_{-j}^{[t_2]}) - f(x_{\text{test}}; \theta_{-j}^{[t_1]}) \right] - \left[ f(x_{\text{test}}; \theta^{[t_2]}) - f(x_{\text{test}}; \theta^{[t_1]}) \right],
	\end{equation}
	where $\theta_{-j}^{[t]}$ denotes the model parameters at time $t$ when trained without sample $z_j$, and $\theta^{[t]}$ denotes the parameters when trained with all samples.
\end{theorem}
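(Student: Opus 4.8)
The plan is to follow exactly the template of Theorem~\ref{theoloss}, replacing the loss functional $\ell(z_{\text{test}};\cdot)$ with the model output functional $f(x_{\text{test}};\cdot)$, since the only property of $\ell$ used in that proof was differentiability in $\theta$ together with the validity of a first-order Taylor expansion. First I would write out the definition of the query-based DIT with the specified query, substituting $q(t)=\nabla_\theta f(x_{\text{test}};\theta^{[t]})$ into
\begin{equation}
	Q_{-j}^{[t_1,t_2]}(q) = \left\langle q(t_2), \Delta\theta_{-j}^{[t_2]} \right\rangle - \left\langle q(t_1), \Delta\theta_{-j}^{[t_1]} \right\rangle,
\end{equation}
where $\Delta\theta_{-j}^{[t]} = \theta_{-j}^{[t]} - \theta^{[t]}$, so that the expression becomes a difference of two inner products $\langle \nabla_\theta f(x_{\text{test}};\theta^{[t_2]}),\, \theta_{-j}^{[t_2]} - \theta^{[t_2]}\rangle$ and $\langle \nabla_\theta f(x_{\text{test}};\theta^{[t_1]}),\, \theta_{-j}^{[t_1]} - \theta^{[t_1]}\rangle$.

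Next I would apply the first-order Taylor expansion of $f(x_{\text{test}};\cdot)$ about $\theta^{[t_2]}$ and about $\theta^{[t_1]}$, evaluated at the perturbed parameters $\theta_{-j}^{[t_2]}$ and $\theta_{-j}^{[t_1]}$ respectively:
\begin{align}
	f(x_{\text{test}}; \theta_{-j}^{[t_2]}) &\approx f(x_{\text{test}}; \theta^{[t_2]}) + \langle \nabla_\theta f(x_{\text{test}}; \theta^{[t_2]}),\, \theta_{-j}^{[t_2]} - \theta^{[t_2]} \rangle, \\
	f(x_{\text{test}}; \theta_{-j}^{[t_1]}) &\approx f(x_{\text{test}}; \theta^{[t_1]}) + \langle \nabla_\theta f(x_{\text{test}}; \theta^{[t_1]}),\, \theta_{-j}^{[t_1]} - \theta^{[t_1]} \rangle.
\end{align}
Rearranging each to isolate the inner-product term and substituting back into the expression for $Q_{-j}^{[t_1,t_2]}(q)$ gives
\begin{equation}
	Q_{-j}^{[t_1,t_2]}(q) \approx \left[ f(x_{\text{test}}; \theta_{-j}^{[t_2]}) - f(x_{\text{test}}; \theta^{[t_2]}) \right] - \left[ f(x_{\text{test}}; \theta_{-j}^{[t_1]}) - f(x_{\text{test}}; \theta^{[t_1]}) \right],
\end{equation}
and regrouping the four terms yields the claimed identity~\eqref{eq:dit_prediction_approx}.

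I do not expect any serious obstacle: the argument is a verbatim transcription of the Theorem~\ref{theoloss} proof. The only point worth a remark is that when $f$ is vector-valued (e.g., a vector of class logits), the statement should be read coordinate-wise, with $\nabla_\theta f(x_{\text{test}};\theta^{[t]})$ interpreted as a Jacobian and the inner products as matrix–vector products; alternatively one fixes a scalar output coordinate and the scalar proof applies directly. The approximation is first-order in $\|\theta_{-j}^{[t]}-\theta^{[t]}\|$, consistent with the Taylor truncation used throughout the paper, and since $\|\theta_{-j}^{[t]}-\theta^{[t]}\|\le M$ by Assumption~(A6) the remainder is controlled by the smoothness of $f$; I would note this but not develop a separate error bound here, since the statement is only asserted as an approximation ($\approx$).
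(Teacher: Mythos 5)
Your proposal is correct and follows essentially the same route as the paper's own proof: substitute the query into the DIT definition, apply the first-order Taylor expansion of $f(x_{\text{test}};\cdot)$ about $\theta^{[t_1]}$ and $\theta^{[t_2]}$, and regroup the four terms. Your added remark on the vector-valued case is a sensible clarification not present in the paper, but the core argument is identical.
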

\begin{proof}
	We begin with the definition of the Query-Based Dynamic Influence Tracker:
	\begin{equation}		\label{eq:dit_def}
		Q_{-j}^{[t_1,t_2]}(q) = \left\langle q(t_2), \Delta\theta_{-j}^{[t_2]} \right\rangle - \left\langle q(t_1), \Delta\theta_{-j}^{[t_1]} \right\rangle
	\end{equation}
	where $\Delta\theta_{-j}^{[t]} = \theta_{-j}^{[t]} - \theta^{[t]}$.
	
	Substituting $q(t) = \nabla_\theta f(z_{\text{test}}; \theta^{[t]})$ into Eq. (\ref{eq:dit_def}):
	\begin{equation}\label{eq:dit_expanded_pre}
		Q_{-j}^{[t_1,t_2]}(q) = \left\langle \nabla_\theta f(z_{\text{test}}; \theta^{[t_2]}), \theta_{-j}^{[t_2]} - \theta^{[t_2]} \right\rangle - \left\langle \nabla_\theta f(z_{\text{test}}; \theta^{[t_1]}), \theta_{-j}^{[t_1]} - \theta^{[t_1]} \right\rangle. 
	\end{equation}
	
	We apply the first-order Taylor approximation of the model function around $\theta^{[t_2]}$ and $\theta^{[t_1]}$:
	\begin{align}
		f(x_{\text{test}}; \theta_{-j}^{[t_2]}) &\approx f(x_{\text{test}}; \theta^{[t_2]}) + \langle \nabla_\theta f(x_{\text{test}}; \theta^{[t_2]}), \theta_{-j}^{[t_2]} - \theta^{[t_2]} \rangle \\
		f(x_{\text{test}}; \theta_{-j}^{[t_1]}) &\approx f(x_{\text{test}}; \theta^{[t_1]}) + \langle \nabla_\theta f(x_{\text{test}}; \theta^{[t_1]}), \theta_{-j}^{[t_1]} - \theta^{[t_1]} \rangle
		\label{eq:taylor_prediction}
	\end{align}
	Rearranging these equations:
	\begin{align}
		\langle \nabla_\theta f(x_{\text{test}}; \theta^{[t_2]}), \theta_{-j}^{[t_2]} - \theta^{[t_2]} \rangle &\approx f(x_{\text{test}}; \theta_{-j}^{[t_2]}) - f(x_{\text{test}}; \theta^{[t_2]}) \\
		\langle \nabla_\theta f(x_{\text{test}}; \theta^{[t_1]}), \theta_{-j}^{[t_1]} - \theta^{[t_1]} \rangle &\approx f(x_{\text{test}}; \theta_{-j}^{[t_1]}) - f(x_{\text{test}}; \theta^{[t_1]})
		\label{eq:rearranged_taylor_prediction}
	\end{align}
	Substituting these approximations back into Eq. (\ref{eq:dit_expanded_pre}):
	\begin{align}
		Q_{-j}^{[t_1,t_2]}(q) &\approx [f(x_{\text{test}}; \theta_{-j}^{[t_2]}) - f(x_{\text{test}}; \theta^{[t_2]})] - [f(x_{\text{test}}; \theta_{-j}^{[t_1]}) - f(x_{\text{test}}; \theta^{[t_1]})] \\
		&= [f(x_{\text{test}}; \theta_{-j}^{[t_2]}) - f(x_{\text{test}}; \theta_{-j}^{[t_1]})] - [f(x_{\text{test}}; \theta^{[t_2]}) - f(x_{\text{test}}; \theta^{[t_1]})]
		\label{eq:dit_prediction_approx_final}
	\end{align}
	This completes the proof of Theorem \ref{theopre}.
\end{proof}
This theorem provides a formal justification for using DIT to analyze how excluding sample $z_j$ influences the model's predictions on a test input $x_{\text{test}}$ over the interval $[t_1, t_2]$. Compared to Theorem \ref{theoloss}, which focuses on the loss value, Theorem \ref{theopre} focuses on specific model outputs. It enables the identification of influential training samples for specific predictions, aids in understanding model behavior on particular inputs, and can help detect potential outliers or mislabeled data.

\subsection{DIT for Feature Importance}
\begin{theorem}[DIT for Feature Importance]
	Given a loss function $\ell(z; \theta)$, a training sample $z = (x, y)$, and a test sample $z_\text{test} = (x_\text{test}, y_\text{test})$, the Dynamic Influence Tracker for Feature Importance with query function $q(t) = \nabla_x \nabla_\theta \ell(z_\text{test}; \theta^{[t]})$ can be approximated as:
	\begin{equation}
		Q_{-j}^{[t_1,t_2]}(q) \approx [\nabla_x \ell(z_\text{test}; \theta_{-j}^{[t_2]}) - \nabla_x \ell(z_\text{test}; \theta_{-j}^{[t_1]})] - [\nabla_x \ell(z_\text{test}; \theta^{[t_2]}) - \nabla_x \ell(z_\text{test}; \theta^{[t_1]})],
		\label{eq:dit_feature_importance_approx}
	\end{equation}
	where $\theta_{-j}^{[t]}$ denotes the model parameters at time $t$ when trained without sample $z_j$, and $\theta^{[t]}$ denotes the parameters when trained with all samples.
\end{theorem}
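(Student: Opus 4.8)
The plan is to mirror, almost verbatim, the arguments used for Theorem~\ref{theoloss} and Theorem~\ref{theopre}, replacing the scalar test quantity by the vector-valued feature gradient $\nabla_x \ell(z_\text{test}; \theta)$. First I would start from the definition of the Query-based Dynamic Influence Tracker,
\begin{equation}
	Q_{-j}^{[t_1,t_2]}(q) = \left\langle q(t_2), \Delta\theta_{-j}^{[t_2]} \right\rangle - \left\langle q(t_1), \Delta\theta_{-j}^{[t_1]} \right\rangle, \qquad \Delta\theta_{-j}^{[t]} = \theta_{-j}^{[t]} - \theta^{[t]},
\end{equation}
and substitute the query $q(t) = \nabla_x \nabla_\theta \ell(z_\text{test}; \theta^{[t]})$. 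The one conceptual point to set up carefully is that $q(t)$ is now the mixed second derivative, which I would view as the Jacobian with respect to $\theta$ of the map $\theta \mapsto \nabla_x \ell(z_\text{test}; \theta)$; the pairing $\langle q(t), v\rangle$ is then the matrix-vector product producing a vector in the feature space $\mathbb{R}^d$, so the identity in Eq.~(\ref{eq:dit_feature_importance_approx}) is read componentwise (equivalently, applied row-by-row with $q_r(t) = \nabla_\theta \partial\ell/\partial x_r$, each row being a legitimate $\mathbb{R}^p$ query of the type in the definition).

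Next I would apply a first-order Taylor expansion of the vector-valued map $\theta \mapsto \nabla_x \ell(z_\text{test}; \theta)$ about $\theta^{[t_2]}$ and about $\theta^{[t_1]}$:
\begin{align}
	\nabla_x \ell(z_\text{test}; \theta_{-j}^{[t_2]}) &\approx \nabla_x \ell(z_\text{test}; \theta^{[t_2]}) + \nabla_\theta \nabla_x \ell(z_\text{test}; \theta^{[t_2]})\,(\theta_{-j}^{[t_2]} - \theta^{[t_2]}), \\
	\nabla_x \ell(z_\text{test}; \theta_{-j}^{[t_1]}) &\approx \nabla_x \ell(z_\text{test}; \theta^{[t_1]}) + \nabla_\theta \nabla_x \ell(z_\text{test}; \theta^{[t_1]})\,(\theta_{-j}^{[t_1]} - \theta^{[t_1]}).
\end{align}
By the symmetry of mixed partials (Clairaut/Schwarz, valid whenever $\ell$ is $C^2$ jointly in $(x,\theta)$), $\nabla_\theta \nabla_x \ell = \nabla_x \nabla_\theta \ell = q(t)$, so the second term in each expansion is exactly $\langle q(t_k), \Delta\theta_{-j}^{[t_k]}\rangle$. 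Rearranging each expansion to isolate that term and substituting back into the definition gives
\begin{equation}
	Q_{-j}^{[t_1,t_2]}(q) \approx [\nabla_x \ell(z_\text{test}; \theta_{-j}^{[t_2]}) - \nabla_x \ell(z_\text{test}; \theta^{[t_2]})] - [\nabla_x \ell(z_\text{test}; \theta_{-j}^{[t_1]}) - \nabla_x \ell(z_\text{test}; \theta^{[t_1]})],
\end{equation}
and regrouping the four terms yields the claimed form.

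I do not expect a genuine obstacle here — the argument is the same first-order device already used twice in this appendix. The only thing to get right is the bookkeeping of the previous paragraph: stating precisely that $q(t)$ acts as a linear map $\mathbb{R}^p \to \mathbb{R}^d$, extending $\langle\cdot,\cdot\rangle$ in the DIT definition to this matrix-vector pairing, and invoking symmetry of second partials so the Taylor Jacobian coincides with the stated query. The approximation is controlled, exactly as before, by the quadratic Taylor remainder in $\|\Delta\theta_{-j}^{[t]}\|$ together with the parameter-change estimation error bounded in Appendix~\ref{app:estimation_error_proof}, so no new error analysis is required.
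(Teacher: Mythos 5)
Your proposal is correct and follows essentially the same route as the paper's proof: substitute the query into the DIT definition, Taylor-expand $\theta \mapsto \nabla_x \ell(z_\text{test}; \theta)$ to first order about $\theta^{[t_1]}$ and $\theta^{[t_2]}$, and regroup. Your added care about the matrix-vector interpretation of the pairing and the symmetry of mixed partials is a welcome clarification of a point the paper glosses over, but it does not change the argument.
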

\begin{proof}
	We start with the definition of the Query-Based Dynamic Influence Tracker:
	\begin{equation}		\label{eq:dit_def_feature_importance}
		Q_{-j}^{[t_1,t_2]}(q) = \left\langle q(t_2), \Delta\theta_{-j}^{[t_2]} \right\rangle - \left\langle q(t_1), \Delta\theta_{-j}^{[t_1]} \right\rangle, 
	\end{equation}
	where $\Delta\theta_{-j}^{[t]} = \theta_{-j}^{[t]} - \theta^{[t]}$.
	
	Substituting $q(t) = \nabla_x \nabla_\theta \ell(z_\text{test}; \theta^{[t]})$:
	\begin{equation}
		Q_{-j}^{[t_1,t_2]}(q) = \left\langle \nabla_\theta \nabla_x \ell(z_{\text{test}}; \theta^{[t_2]}), \theta_{-j}^{[t_2]} - \theta^{[t_2]} \right\rangle - \left\langle \nabla_\theta \nabla_x \ell(z_{\text{test}}; \theta^{[t_1]}), \theta_{-j}^{[t_1]} - \theta^{[t_1]} \right\rangle. 
		\label{eq:dit_feature_importance_expanded}
	\end{equation}
	
	We apply the first-order Taylor approximation of $\nabla_x \ell(z_\text{test}; \theta)$ around $\theta^{[t_2]}$ and $\theta^{[t_1]}$:
	\begin{align} \nabla_x \ell(z_{\text{test}}; \theta_{-j}^{[t_2]}) &\approx \nabla_x \ell(z_{\text{test}}; \theta^{[t_2]}) + \nabla_\theta \nabla_x \ell(z_{\text{test}}; \theta^{[t_2]}) \left( \theta_{-j}^{[t_2]} - \theta^{[t_2]} \right), \label{eqfe1} \\
		\nabla_x \ell(z_{\text{test}}; \theta_{-j}^{[t_1]}) &\approx \nabla_x \ell(z_{\text{test}}; \theta^{[t_1]}) + \nabla_\theta \nabla_x \ell(z_{\text{test}}; \theta^{[t_1]}) \left( \theta_{-j}^{[t_1]} - \theta^{[t_1]} \right). \label{eqfe2} 
	\end{align}
	Rearranging these equations:
	\begin{align} 
		\left\langle \nabla_\theta \nabla_x \ell(z_{\text{test}}; \theta^{[t_2]}), \theta_{-j}^{[t_2]} - \theta^{[t_2]} \right\rangle &\approx \nabla_x \ell(z_{\text{test}}; \theta_{-j}^{[t_2]}) - \nabla_x \ell(z_{\text{test}}; \theta^{[t_2]}), \label{eqrearranged_taylor_feature_importance1} \\
		\left\langle \nabla_\theta \nabla_x \ell(z_{\text{test}}; \theta^{[t_1]}), \theta_{-j}^{[t_1]} - \theta^{[t_1]} \right\rangle &\approx \nabla_x \ell(z_{\text{test}}; \theta_{-j}^{[t_1]}) - \nabla_x \ell(z_{\text{test}}; \theta^{[t_1]}). \label{eq:rearranged_taylor_feature_importance}
	\end{align}
	
	Substituting these approximations back into Eq.(\ref{eq:dit_feature_importance_expanded}):
	\begin{align} 
		Q_{-j}^{[t_1,t_2]}(q) &\approx \left[ \nabla_x \ell(z_{\text{test}}; \theta_{-j}^{[t_2]}) - \nabla_x \ell(z_{\text{test}}; \theta^{[t_2]}) \right] - \left[ \nabla_x \ell(z_{\text{test}}; \theta_{-j}^{[t_1]}) - \nabla_x \ell(z_{\text{test}}; \theta^{[t_1]}) \right] \nonumber \\ 
		&= \left[ \nabla_x \ell(z_{\text{test}}; \theta_{-j}^{[t_2]}) - \nabla_x \ell(z_{\text{test}}; \theta_{-j}^{[t_1]}) \right] - \left[ \nabla_x \ell(z_{\text{test}}; \theta^{[t_2]}) - \nabla_x \ell(z_{\text{test}}; \theta^{[t_1]}) \right]. 	\label{eq:dit_feature_importance_approx_final}
	\end{align}
	This completes the proof.
\end{proof}

This theorem shows how DIT measures the impact of excluding a training sample $z_j$ on the gradient of the loss with respect to the input features at the test point $z_\text{test}$ over the interval $[t_1, t_2]$. This provides insights into how the importance of different input features evolves during training and how individual training samples influence this feature importance.

\subsection{DIT for Parameter Importance}
\begin{theorem}[DIT for Parameter Importance]
	Given a model with parameters $\theta \in \mathbb{R}^p$, a time window $[t_1, t_2]$, a training sample $z_j$, and the $i$-th standard basis vector $e_i \in \mathbb{R}^p$, the Dynamic Influence Tracker with query function $q(t) = (e_i)$ is exactly:
	\begin{equation} 
		Q_{-j}^{[t_1,t_2]}(q) = \left( \theta_{-j,i}^{[t_2]} - \theta_{-j,i}^{[t_1]} \right) - \left( \theta_i^{[t_2]} - \theta_i^{[t_1]} \right), \label{eq:dit_parameter_importance}
	\end{equation} 
	where $\theta_{-j,i}^{[t]}$ denotes the $i$-th component of the model parameters at time $t$ when trained without sample $z_j$, and $\theta_i^{[t]}$ denotes the $i$-th component of the parameters when trained with all samples.
\end{theorem}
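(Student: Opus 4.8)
The plan is to specialize the general Query-based DIT identity to the constant query function $q(t) = e_i$ and observe that no Taylor approximation is needed, so the result is an exact equality rather than an approximation. First I would start from the definition
\begin{equation*}
Q_{-j}^{[t_1,t_2]}(q) = \left\langle q(t_2), \Delta\theta_{-j}^{[t_2]} \right\rangle - \left\langle q(t_1), \Delta\theta_{-j}^{[t_1]} \right\rangle,
\end{equation*}
where $\Delta\theta_{-j}^{[t]} = \theta_{-j}^{[t]} - \theta^{[t]}$. Substituting $q(t_1) = q(t_2) = e_i$ gives
\begin{equation*}
Q_{-j}^{[t_1,t_2]}(e_i) = \left\langle e_i, \theta_{-j}^{[t_2]} - \theta^{[t_2]} \right\rangle - \left\langle e_i, \theta_{-j}^{[t_1]} - \theta^{[t_1]} \right\rangle.
\end{equation*}

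Next I would use the elementary fact that for any vector $v \in \mathbb{R}^p$, the inner product $\langle e_i, v \rangle = v_i$ picks out the $i$-th coordinate. Applying this to $v = \theta_{-j}^{[t_2]} - \theta^{[t_2]}$ and to $v = \theta_{-j}^{[t_1]} - \theta^{[t_1]}$, and using linearity of the coordinate projection, we obtain
\begin{equation*}
Q_{-j}^{[t_1,t_2]}(e_i) = \left( \theta_{-j,i}^{[t_2]} - \theta_i^{[t_2]} \right) - \left( \theta_{-j,i}^{[t_1]} - \theta_i^{[t_1]} \right).
\end{equation*}
A final rearrangement of the four scalar terms yields the claimed form $\left( \theta_{-j,i}^{[t_2]} - \theta_{-j,i}^{[t_1]} \right) - \left( \theta_i^{[t_2]} - \theta_i^{[t_1]} \right)$.

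The only subtlety worth flagging explicitly is that this is the one member of the DIT toolkit where the relation holds with exact equality: unlike the loss-value, prediction, and feature-importance cases (Theorems on those quantities), the standard basis query requires no first-order Taylor expansion of a nonlinear functional, because the coordinate map $\theta \mapsto \theta_i$ is already linear. I would make this remark after the computation to explain why the statement uses $=$ rather than $\approx$. There is no real obstacle here; the proof is three lines, and the main point is conceptual rather than technical — emphasizing that DIT directly tracks per-coordinate parameter drift attributable to sample $z_j$ over the window $[t_1,t_2]$.
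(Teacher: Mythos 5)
Your proposal is correct and follows essentially the same route as the paper's proof: substitute the constant query $q(t)=e_i$ into the definition, use that the inner product with $e_i$ extracts the $i$-th coordinate, and rearrange the four scalar terms. Your added remark that exactness follows from the linearity of the coordinate map (so no Taylor expansion is needed, unlike the other toolkit theorems) is a correct and worthwhile observation, though the paper does not state it explicitly.
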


\begin{proof}
	We start with the definition of the Query-Based Dynamic Influence Tracker:
	\begin{equation}		\label{eq:dit_def_parameter}
		Q_{-j}^{[t_1,t_2]}(q) = \left\langle q(t_2), \Delta\theta_{-j}^{[t_2]} \right\rangle - \left\langle q(t_1), \Delta\theta_{-j}^{[t_1]} \right\rangle, 
	\end{equation}
	where $\Delta\theta_{-j}^{[t]} = \theta_{-j}^{[t]} - \theta^{[t]}$.
	
	Substituting $q(t) = e_i$, which is constant over time:
	\begin{equation} 
		Q_{-j}^{[t_1,t_2]}(q) = \left\langle e_i, \theta_{-j}^{[t_2]} - \theta^{[t_2]} \right\rangle - \left\langle e_i, \theta_{-j}^{[t_1]} - \theta^{[t_1]} \right\rangle. 	\label{eq:dit_parameter_expanded}
	\end{equation}
	Since $e_i$ is the $i$-th standard basis vector, the inner product selects the $i$-th component: 
	\begin{equation} 
		Q_{-j}^{[t_1,t_2]}(q) = \left( \theta_{-j,i}^{[t_2]} - \theta_i^{[t_2]} \right) - \left( \theta_{-j,i}^{[t_1]} - \theta_i^{[t_1]} \right) 
		= \left( \theta_{-j,i}^{[t_2]} - \theta_{-j,i}^{[t_1]} \right) - \left( \theta_i^{[t_2]} - \theta_i^{[t_1]} \right). 
	\end{equation}
	This matches the expression in Eq. (\ref{eq:dit_parameter_importance}), completing our proof.
\end{proof}
This theorem allows us to isolate the influence of a training sample $z_j$ on specific model parameters over the interval $[t_1, t_2]$. A large absolute value of $Q_{-j}^{[t_1,t_2]}(q)$ indicates that sample $z_j$ has a significant influence on the $i$-th parameter during the specified time window. This is particularly useful for identifying which parameters are most affected by specific training samples and understanding the localized effects of training samples on the model.

By analyzing how $Q_{-j}^{[t_1,t_2]}(q)$ changes over different time windows, we can understand how the influence of a training sample on specific parameters evolves throughout the training process.

\clearpage
\subsection{Proof of Algorithm 2} \label{appenproofalg2}
We begin by recalling the definition:
\begin{equation}
	\label{eq:Q_definition}
	Q_{-j}^{[t_1,t_2]}(q) = \langle q(t_2), \Delta\theta_{-j}^{[t_2]} \rangle - \langle q(t_1), \Delta\theta_{-j}^{[t_1]} \rangle
\end{equation}
where $\Delta \theta_{-j}^{[t]} \approx \sum_{s=0}^{t-1} \left(\prod_{k=s+1}^{t-1} Z_k\right) \mathbf{\tilde{1}}_{j}^{[s]}$,
and $Z_t = I - \eta_{t} H^{[t]}$, $\mathbf{\tilde{1}}_{j}^{[t]} = \mathbf{1}_{j \in S_{t}}\frac{\eta_{t}}{|S_{t}|} g(z_j; \theta^{[t]})$.

Note that $Z_t$ is self-adjoint matrix, adhering to $\langle x, Z_t y\rangle = \langle Z_t x, y\rangle$ for all vectors $x, y$.

According to the update rules for $u_1$ and $u_2$ in the algorithm:
\begin{equation}
	\label{eq:update_rule}
	u_i^{[t-1]} = u_i^{[t]} - \eta_t H^{[t]} u_i^{[t]} = (I - \eta_t H^{[t]})u_i^{[t]} = Z_t u_i^{[t]}, \quad i \in \{1,2\}
\end{equation}
By recursive application of this update rule, we obtain for $s < t$:
\begin{equation}
	\label{eq:recursive_update}
	u_i^{[s]} = \left(\prod_{k=s+1}^{t-1} Z_k\right) u_i^{[t]}, \quad i \in \{1,2\}
\end{equation}
According to the accumulation of $Q$ in the algorithm, at each time step $t$, if $j \in S_t$, we have:
\begin{equation}
	\label{eq:Delta_Q}
	\Delta Q_t = \left\langle (u_2^{[t]} - u_1^{[t]}), \dfrac{\eta_t}{|S_t|} g(z_j; \theta^{[t]}) \right\rangle
\end{equation}
The algorithm initializes $u_2^{[t_2-1]} = q(t_2)$ and sets $u_1^{[t_1-1]} = q(t_1)$ at time $t_1$. Importantly, $u_1$ is not updated beyond $t_1$. Using the result from Eq. (\ref{eq:recursive_update}), we can express $u_2^{[t]}$ and $u_1^{[t]}$ as:
\begin{equation}
	\label{eq:u2_expression}
	u_2^{[t]} = \prod_{k=t+1}^{t_2-1} Z_k q(t_2), \quad \text{for } 0 \leq t < t_2
\end{equation}
\begin{equation}
	\label{eq:u1_expression}
	u_1^{[t]} = \begin{cases}
		\prod_{k=t+1}^{t_1-1} Z_k q(t_1) & \text{for } 0 \leq t < t_1 \\
		0 & \text{for } t_1 \leq t < t_2
	\end{cases}
\end{equation}
Note that $u_1^{[t]} = 0$ for $t_1 \leq t < t_2$ because $u_1$ is not updated beyond $t_1$, effectively removing its contribution to $\Delta Q_t$ in this range.

Substituting these expressions into Eq.(\ref{eq:Delta_Q}):
\begin{equation}
	\label{eq:Delta_Q_expanded}
	\Delta Q_t = \begin{cases}
		\left\langle \prod_{k=t+1}^{t_2-1} Z_k q(t_2) - \left(\prod_{k=t+1}^{t_1-1} Z_k q(t_1)\right), \mathbf{\tilde{1}}_j^{[t]} \right\rangle & \text{for } 0 \leq t < t_1 \\
		\left\langle \prod_{k=t+1}^{t_2-1} Z_k q(t_2), \mathbf{\tilde{1}}_j^{[t]} \right\rangle & \text{for } t_1 \leq t < t_2
	\end{cases}
\end{equation}
The total $Q$ is the sum of all $\Delta Q_t$: $Q = \sum_{t=0}^{t_2-1} \Delta Q_t$.

Expanding this sum and recalling that $Z_t$ is self-adjoint, we get:
\begin{equation}
	\label{eq:Q_expanded}
	Q = \left\langle q(t_2), \sum_{t=0}^{t_2-1} \left(\prod_{k=t+1}^{t_2-1} Z_k\right) \mathbf{\tilde{1}}_j^{[t]} \right\rangle - \left\langle q(t_1), \sum_{t=0}^{t_1-1} \left(\prod_{k=t+1}^{t_1-1} Z_k\right) \mathbf{\tilde{1}}_j^{[t]} \right\rangle
\end{equation}
Note that $u_2^{[t]}$ contributes to the first term over the entire interval $[0, t_2)$, while $u_1^{[t]}$ only contributes to the second term over $[0, t_1)$. This distinction arises from the algorithm's design, where $u_1$ is not updated beyond $t_1$.

Combined Eq. (\ref{eq:Q_expanded}) are precisely the definitions of $\Delta\theta_{-j}^{[t_2]}$ and $\Delta\theta_{-j}^{[t_1]}$, we have:
\begin{equation}
	\label{eq:Q_final}
	Q = \langle q(t_2), \Delta\theta_{-j}^{[t_2]} \rangle - \langle q(t_1), \Delta\theta_{-j}^{[t_1]} \rangle = Q_{-j}^{[t_1,t_2]}(q)
\end{equation}

Thus, we have rigorously demonstrated that the algorithm's output $Q$ is equivalent to the defined $Q_{-j}^{[t_1,t_2]}(q)$ in Eq. (\ref{eq:Q_definition}) under the stated assumption on $\eta_t$.

\clearpage
\section{Checkpoint-based Implementation of DIT}\label{app:dit_Checkpoints}
To balance storage overhead and computational efficiency, we propose a checkpoint-based implementation of DIT. This implementation significantly reduces storage requirements while maintaining the ability to compute accurate influence values for any time window.

Instead of storing parameters at every training step, we store checkpoints at regular intervals (e.g., epoch boundaries) along with essential training metadata (batch indices and learning rates).  When computing influence for a time window $[t_{1},t_{2}]$, we efficiently recover necessary parameters by loading the nearest checkpoint before $t_{1}$, reconstructing the parameter trajectory up to $t_{2}$, and storing intermediate parameters required for influence computation. The checkpoint interval provides a configurable trade-off between storage overhead and computational cost. More frequent checkpoints reduce recomputation but increase storage, while fewer checkpoints save storage at the cost of more recomputation.

\begin{figure}[H]
	\centering
	\begin{minipage}[t]{.48\textwidth}
		\begin{algorithm}[H]
			\caption{DIT Training with Checkpoints}\label{alg:dit_checktraining}
			\begin{algorithmic}[1]
				\Require Training dataset $D = \{z_n\}_{n=1}^N$, learning rate $\eta_t$, batch size $|S_t|$, training steps $T$, checkpoint interval $C$
				\Ensure Stored checkpoints and metadata $M$
				\State Initialize model parameters $\theta^{[0]}$
				\State Initialize metadata storage $M \gets \{\}$ \Comment{Store checkpoints, batch indices, learning rates}
				\For {$t = 0$ \textbf{to} $T-1$}
				\State $S_t \gets \text{SampleBatch}(D, |S_t|)$
				\State $\theta^{[t+1]} \gets \theta^{[t]} - \frac{\eta_t}{|S_t|} \sum_{i \in S_t} g(z_i; \theta^{[t]})$
				\State $M.indices[t] \gets S_t$ \Comment{Store batch indices}
				\State $M.lr[t] \gets \eta_t$ \Comment{Store learning rate}
				\If{$t \bmod C = 0$ \textbf{or} $t = T-1$}
				\State $M.checkpoints[t] \gets \theta^{[t+1]}$ \Comment{Store checkpoint}
				\EndIf
				\EndFor
				\State \textbf{return} $M$
			\end{algorithmic}
		\end{algorithm}
	\end{minipage}%
	\hfill
	\begin{minipage}[t]{.48\textwidth}	
		\begin{algorithm}[H]
			\caption{DIT Sample Influence with Checkpoints}\label{alg:dit_checkinfluence}
			\begin{algorithmic}[1]
				\Require Metadata $M$, query function $q$, time window $[t_1, t_2]$, sample $z_j$
				\Ensure Estimated influence $Q$
				\State $Q \gets 0$, $u_1^{[t_2-1]} \gets 0$, $u_2^{[t_2-1]} \gets q(t_2)$
				\State $c_1 \gets \max\{t : t \leq t_1 \text{ and } t \in M.checkpoints\}$ \Comment{Find nearest checkpoint before $t_1$}
				\State $\theta^{[c_1]} \gets M.checkpoints[c_1]$ 
				\State \Comment{Compute and store all necessary parameters from checkpoint to $t_2$}
				\State Initialize parameter storage $P \gets \{\}$
				\For {$t = c_1$ \textbf{to} $t_2-1$}
				\State $S_t \gets M.indices[t]$
				\State $\eta_t \gets M.lr[t]$
				\If{$t \in M.checkpoints$}
				\State $\theta^{[t]} \gets M.checkpoints[t]$
				\EndIf
				\State $\theta^{[t+1]} \gets \theta^{[t]} - \frac{\eta_t}{|S_t|} \sum_{i \in S_t} g(z_i; \theta^{[t]})$
				\State $P[t] \gets \theta^{[t]}$ \Comment{Store parameter for influence computation}
				\EndFor
				\For {$t = t_2-1$ \textbf{downto} $t_1$} 
				\If{$j \in M.indices[t]$}
				\State $Q \gets Q + \left\langle (u_2^{[t]} - u_1^{[t]}), \frac{M.lr[t]}{|M.indices[t]|} g(z_j; P[t]) \right\rangle$
				\EndIf 
				\State $H^{[t]} \gets \frac{1}{|M.indices[t]|} \sum_{i \in M.indices[t]} \nabla_\theta g(z_i; P[t])$
				\State $u_1^{[t-1]} \gets u_1^{[t]} - M.lr[t] H^{[t]} u_1^{[t]}$
				\State $u_2^{[t-1]} \gets u_2^{[t]} - M.lr[t] H^{[t]} u_2^{[t]}$
				\If{$t = t_1$}
				\State $u_1^{[t-1]} \gets q(t_1)$
				\EndIf
				\EndFor
				\State \textbf{return} $Q$
			\end{algorithmic}
		\end{algorithm}
	\end{minipage}
\end{figure}

\clearpage
\section{Comparison of Different Influence Analysis Methods} \label{appcompdiam}
Our investigation of Dynamic Influence Tracker (DIT) reveals several important insights about the nature of sample influence in deep learning. We first present a comprehensive comparison of different sample influence analysis methods:

\begin{table*}[h]
	\centering
	\caption{Comprehensive Comparison of Different Sample Influence Analysis Methods}
	\label{tab:keyfindings_comparison}
	\begin{tabular}{lllll}
		\toprule
		\textbf{Aspect} & \textbf{Full-DIT} & \textbf{First-Epoch DIT} & \textbf{Influence Functions} & \textbf{Leave-One-Out} \\
		\midrule
		Space Complexity & $O(T(|S_t| + p))$ & $O(E(|S_t| + p))$ & $O(p^2)$ & $O(p)$ \\
		Time Complexity & $O(T|S_t|p)$ & $O(E|S_t|p)$ & $O(p^3 + Np^2)$ & $O(NT|S_t|p)$ \\
		Robustness to Non-convergence & Yes & Yes & No & No \\ 
		Robustness to Non-convexity & Yes & Yes & No & Yes \\ 
		Robustness to Global Optimality & Yes & Yes & No & Yes \\ 
		Adaptability & Flexible & Flexible & Static & Static \\ 
		\bottomrule
		\multicolumn{3}{l}{\small $T$ = total steps, $E$ = steps per epoch, $p$ = parameters, $|S_t|$ = batch size} \\
	\end{tabular}
\end{table*}

This comparison highlights several key findings:
\begin{enumerate}
	\item Our results show that sample influence is not static but evolves significantly throughout the training process. The identification of four distinct influence patterns (Stable Influencers, Early Influencers, Late Bloomers, and Highly Fluctuating Influencers) challenges the traditional static view of sample importance.
	\item The strong correlation between mid-training and full-training influence measures suggests that influential samples can be identified well before model convergence. This finding has practical implications for efficient training protocols and early intervention strategies.
	\item The consistency of influence patterns across different model architectures for the same dataset suggests that sample influence is more intrinsically tied to data characteristics than model architecture.
\end{enumerate}

\clearpage
\section{Experimental Supplement} 
\subsection{Experimental Setup}\label{appexp}
\paragraph{Datasets}
We employed four diverse datasets spanning various domains and complexities to evaluate the robustness and generalizability of DIT.
\begin{itemize}
	\item \textbf{Adult}~\cite{Dua2019}: A dataset for income prediction containing 48,842 instances with 14 mixed categorical and numerical features. The dataset is preprocessed by handling missing values, normalizing numerical features, and applying one-hot encoding to categorical features. The task is a binary classification of predicting whether income exceeds \$50K/year.
	\item \textbf{20 Newsgroups}~\cite{Lang95}: A text classification dataset. Text data is converted to TF-IDF vectors, and stop words are removed for cleaner feature representation. We focus on binary classification between categories \textit{comp.sys.ibm.pc.hardware} and \textit{comp.sys.mac.hardware}. The task is to classify posts into one of the two hardware categories.
	\item \textbf{MNIST}~\cite{lecun-mnisthandwrittendigit-2010}: A dataset of 70,000 handwritten digit images, each 28x28 pixels in grayscale. Binary classification is conducted between digits \textit{1} and \textit{7}, where the pixel intensities are normalized.
	\item \textbf{EMNIST}~\cite{Cohen2017EMNIST}: An extended MNIST dataset for handwritten letters. The grayscale images are normalized to ensure uniformity in the input space. We focus on binary classification between letters \textit{A} and \textit{B}.
\end{itemize}

\paragraph{Model Architectures}
We evaluated DIT using different model architectures of varying complexity. 
\begin{itemize}
	\item \textbf{Logistic Regression (LR)}: Implemented as a single-layer neural network without hidden layers. The input dimension is flattened to accommodate various input shapes. 
	\item \textbf{Deep Neural Network (DNN)}: The architecture comprises two hidden layers, each with eight units followed by a ReLU activation function. The second layer outputs a single value for binary classification. The input is flattened, similar to logistic regression.
	\item \textbf{Convolutional Neural Network (CNN)}: This architecture is used for image datasets like MNIST and EMNIST. It consists of two convolutional layers, with 32 and 64 filters, respectively, each followed by ReLU activation and max-pooling. The final output from the convolutional layers is flattened and passed through a linear layer to output a binary classification value.
\end{itemize}
For non-image data like Adult and 20 Newsgroups, the input is a vector, while image data like MNIST and EMNIST is reshaped into a single dimension for LR and DNN models. The CNN processes image data in its original 2D format. All these models output a single value and use binary cross-entropy loss with logits for classification, with input/output dimensions adapted to each dataset.

\paragraph{Evaluation Metrics}\label{appmetrics}
To comprehensively evaluate the performance of DIT, we employed a suite of statistical metrics, each capturing different aspects of the relationship between compared methods:
\begin{itemize}
	\item \textbf{Pearson Correlation Coefficient ($r$)}~\cite{pearson1895vii}: The Pearson correlation coefficient measures the linear relationship between two variables. For two sets of data, X and Y, it is calculated as:
	$$r = \frac{\sum_{i=1}^{n} (X_i - \bar{X})(Y_i - \bar{Y})}{\sqrt{\sum_{i=1}^{n} (X_i - \bar{X})^2 \sum_{i=1}^{n} (Y_i - \bar{Y})^2}}$$
	where $\bar{X}$ and $\bar{Y}$ are the means of X and Y respectively, and $n$ is the number of samples. This metric is valuable for identifying direct proportional or inversely proportional relationships within the data.
	$r$ ranges from -1 to 1, where 1 indicates a perfect positive linear relationship, -1 indicates a perfect negative linear relationship, and 0 indicates no linear relationship.
	\item \textbf{Spearman's Rank Correlation Coefficient ($\rho$)}~\cite{spearman1961proof}: Spearman's rank correlation assesses monotonic relationships by comparing the rank orders of samples:
	$$\rho = 1 - \frac{6 \sum_{i=1}^{n} d_i^2}{n(n^2 - 1)}$$
	where $d_i$ is the difference between the ranks of corresponding values $X_i$ and $Y_i$, and $n$ is the number of samples.
	$\rho$ ranges from -1 to 1, with values close to 1 or -1 indicating strong monotonic relationships (positive or negative, respectively) and values close to 0 indicating weak monotonic relationships.
	
	\item \textbf{Kendall's Tau ($\tau$)}~\cite{kendall1938new}: Kendall's Tau evaluates ordinal relationships by measuring the number of concordant and discordant pairs:
	$$\tau = \frac{2(n_c - n_d)}{n(n-1)}$$
	where $n_c$ is the number of concordant pairs, $n_d$ is the number of discordant pairs, and $n$ is the total number of pairs. 
	$\tau$ ranges from -1 to 1, with 1 indicating perfect agreement between two rankings, -1 indicating perfect disagreement, and 0 indicating no relationship.
	
	\item \textbf{Jaccard Similarity ($J$)}~\cite{jaccard1912distribution}: The Jaccard similarity coefficient compares the overlap between the top 30\% of influential points as determined by different methods:
	$$J(A, B) = \frac{|A \cap B|}{|A \cup B|}$$
	where $A$ and $B$ are the sets of top 30\% influential points identified by different methods. 
	$J$ ranges from 0 to 1, with 1 indicating perfect overlap between the sets and 0 indicating no overlap.
\end{itemize}
By capturing linear relationships (Pearson), monotonic relationships (Spearman), ordinal relationships (Kendall's Tau), and set-based similarities (Jaccard), we ensure a multifaceted evaluation of influence estimation methods. 

To ensure transparency and reproducibility, all code, including detailed hyperparameter settings and training procedures, is available on our GitHub repository. This repository contains scripts and configuration files that define the exact setup for each model used in our experiments, encompassing learning rates, batch sizes, regularization strategies, and any other model-specific training details.

\subsection{Sample Influence Dynamics Methodology} \label{apppattern}
The methodology for analyzing sample influence dynamics consists of several key steps.
\begin{enumerate}
	\item \textbf{Sampling and Influence Tracking}: We randomly select 256 training samples and track their influence, measured as loss change via LOO, over 20 epochs of training. This fine-grained sampling provides detailed influence trajectories for each point.
	\item \textbf{ Standardization and Trend Analysis}: We standardize the influence values using StandardScaler to normalize the value across different epochs. For each sample, a linear regression is performed on its standardized influence values over time. The slope of this regression line indicates the overall trend direction (increasing or decreasing influence). The p-value of the regression determines whether this trend is statistically significant.
	\item \textbf{Adaptive Pattern Categorization}: Each sample is categorized based on its statistical properties, including
	a) Trend significance (determined by the p-value)
	b) Trend direction (positive or negative slope)
	c) Standard deviation of influence values (a measure of fluctuation).
	\item \textbf{Pattern Analysis}: We calculate the proportion of samples in each category and compute the centroid of each category by averaging the standardized influence values of all points within that category.
\end{enumerate}

\subsection{Identification of Training Stages} \label{appstages}
To identify stages in the training process, we utilized the following method:
\begin{enumerate}
	\item \textbf{Modeling Loss Trajectory}: We analyzed the loss trajectory across epochs by fitting an exponential decay model. This approach helps to smooth out fluctuations and emphasize underlying trends in the training loss.
	\item \textbf{Residual Calculation}: Residuals were computed as the differences between the actual loss values and the values predicted by the exponential model. These residuals highlight where the actual training deviates from the predicted trend.
	\item \textbf{Change Point Detection}: We identified peaks in the absolute residuals as change points. A minimum distance criterion was applied to ensure these change points were evenly distributed across the training timeline.
	\item \textbf{Stage Segmentation}: Based on the identified change points, the training process was divided into three stages: early, middle, and late. 
\end{enumerate}

\end{document}